\documentclass{article}
\usepackage[utf8]{inputenc}
\usepackage{fullpage}
\usepackage{tikz}
\usepackage{subcaption}
\usepackage{hyperref}
\usepackage[shortlabels]{enumitem}
\usetikzlibrary{automata,positioning}
\usepackage{microtype}
\usepackage{graphicx}
\usepackage{booktabs}
\usepackage{xcolor}
\usepackage{hyperref}

\usepackage{amssymb,amsmath,amsthm}
\usepackage{algorithm,algorithmic}

\DeclareMathOperator*{\argmax}{arg\,max}
\DeclareMathOperator*{\poly}{poly}
\newcommand{\eh}[1]{\textsf{\color{green} Elad : #1}}

\newcommand{\sk}[1]{\textsf{\color{magenta} SK : #1}}

\title{Provably Efficient Maximum Entropy Exploration}
\author{
  Elad Hazan$^{1\,2}$ \qquad Sham M. Kakade$^{1\,3\,4}$ \qquad Karan Singh$^{1\,2}$ \qquad Abby Van Soest$^{2}$\\
  \\
  $^1$ Google AI Princeton \\
  $^2$ Department of Computer Science, Princeton University \\
  $^3$ Allen School of Computer Science and Engineering, University of Washington \\
  $^4$ Department of Statistics, University of Washington\\
  \texttt{\{ehazan,karans,asoest\}@princeton.edu}, \texttt{sham@cs.washington.edu} \\
}   
\date{}

\DeclareMathOperator*{\argmin}{argmin}

\def\K{{\mathcal K}}

\def\reals{{\mathbb R}}


\newcommand{\ignore}[1]{}

\def\reals{{\mathbb R}}

\def\bold0{\mathbf{0}}

\newcommand{\eps}{\varepsilon}







\def\eps{\varepsilon}
\def\epsilon{\varepsilon}


%

\newtheorem{theorem}{Theorem}[section]

\newtheorem{lemma}[theorem]{Lemma}
\newtheorem{corollary}[theorem]{Corollary}

\newcommand{\newreptheorem}[2]{%
\newenvironment{rep#1}[1]{%
 \def\rep@title{#2 \ref{##1}}%
 \begin{rep@theorem}}%
 {\end{rep@theorem}}}

\newreptheorem{theorem}{Theorem}
\newreptheorem{lemma}{Lemma}
\newreptheorem{proposition}{Proposition}
\newreptheorem{claim}{Claim}
\newreptheorem{corollary}{Corollary}
\newreptheorem{mainlemma}{Main Lemma}


\newcommand{\namedref}[2]{\mbox{\hyperref[#2]{#1~\ref*{#2}}}}

\newcommand{\figurerefb}[2]{\mbox{\hyperref[#1]{Figure~\ref*{#1}#2}}}

\newcommand{\equationref}[1]{\mbox{\hyperref[#1]{(\ref*{#1})}}}
\renewcommand{\eqref}{\equationref}


\numberwithin{equation}{section}

\begin{document}

\maketitle

\def\arxiv{}

\begin{abstract}
Suppose an agent is in a (possibly unknown) Markov Decision Process in
the absence of a reward signal, what might we hope that an agent can
efficiently learn to do? 
This work studies a broad class of objectives that are
defined solely as functions of the state-visitation frequencies that are
induced by how the agent behaves. For example, one natural, intrinsically defined, objective
problem is for the agent to learn a policy which induces a
distribution over state space that is as uniform as possible, which
can be measured in an entropic sense.  We provide an efficient
algorithm to optimize such such intrinsically defined objectives, when
given access to a black box planning oracle (which is robust to
function approximation). Furthermore, when restricted to the tabular
setting where we have sample based access to the MDP, our proposed
algorithm is provably efficient, both in terms of its sample and
computational complexities. Key to our algorithmic methodology is
utilizing the conditional gradient method (a.k.a. the Frank-Wolfe
algorithm) which utilizes an approximate MDP solver.
\end{abstract}

\ignore{
\section{TODO for ICML}

\begin{enumerate}
    \item    experiments: try out using KL rather than entropy. even for Mtn. car and cart pole. approximation by gaussian projection. Ant and humanoid (all of these are handled by Abby)
    \item make clear that we competing against all non-stationary policies.
    \item show that opt can be made stationary and stochastic.
    \item 
    add explanation that max-ent may take convex form over fully observable MDP in LP for (verify that), but not in general. 
    Justify the oracle model by DNN etc. \sk{yes, I believe i can make it convex. the oracle model is fine.}  \eh{I hope I explained this below in alternate intro}
    
    \item
    cite the Farias-van-Roy paper: for some linear approximations the above may also hold true \eh{done}
    
    \item
    experiments: probably something much more extensive is needed:
    \begin{enumerate}
        \item 
        comparators beyond random?
        \item
        objectives different than max-ent that we can handle, such as log-barrier, relative entropy etc.
        
        \item
        mujoco, other more challenging tasks. overcome the exponential grid problem \eh{currently handled by random projection, seems sufficient for this paper} 
        
        \item
        think of continuous spaces in general, how to max-ent without discretizing? 
    \end{enumerate}
\end{enumerate}
}

\section{Introduction} 
A fundamental problem in reinforcement learning is that of exploring the state space. How do we understand what is even possible in a given environment without a reward signal?  

This question has received a lot of attention, with approaches such as learning with intrinsic reward and curiosity driven methods, surveyed below. Our work studies a class of objectives that is defined solely as function of the state-visitation frequencies. A natural such objective is finding a policy that maximizes the entropy of the induced distribution over the state space. More generally, our approach extends to any concave function over distributions.

Suppose the MDP is fully and precisely known, in terms of states, actions, and the entire transition matrix. Then maximizing the entropy can be recast as a convex optimization problem (see Section~\ref{conv} or \cite{de2003linear}) over the space of state-visitation frequencies induced by the exhaustive set of all policies. However, most RL instances that are common in practice exhibit at least one of several complications:\\
---  prohibitively large state space (i.e. Chess or Go) \\
---  unknown transition matrix (as in common Atari games) \\
These scenarios often require function approximation, ie. restricting the search to a non-linearly parameterized policy class (eg. neural networks), which makes the entropy maximization problem non-convex. 

As a remedy for the computational difficulty, we propose considering an approximate planning oracle: an efficient method that given a well-specified reward signal can find an optimizing policy. Such sample-based planning oracles have been empirically observed to work well with non-linear policy classes. Given such an oracle, we give a provably efficient method for exploration based on the conditional gradient (or Frank-Wolfe) algorithm \cite{frank1956algorithm}. 

Formally, we show how to generate a sequence of reward signals, that sequentially optimized give rise to a policy with entropy close to optimal. Our main theorem gives a bound on the number of calls to the planning oracle, which is independent of the size of the state space of the MDP. 
Next, we outline the efficient construction of such oracles and state the resultant sample \& computational complexity in the tabular MDP setting. As a proof of concept, we implement our method and show experiments over several mainstream RL tasks in Section \ref{sec:experiments}. 



\subsection{Informal statement of contributions} \label{subsec:this}

To facilitate exploration in potentially unknown MDPs within a restricted policy class, we assume access to the environment using the following two oracles: 

\textbf{Approximate planning oracle:} Given a reward function
  (on states) $r:\mathcal{S}\to\reals$ and a sub-optimality gap
  $\varepsilon$,
  the planning oracle returns a stationary policy $\pi=\textsc{ApproxPlan}(r,\varepsilon)$ with
  the guarantee that $V(\pi) \geq \max_\pi V(\pi) - \varepsilon$, where $V(\pi)$ is the value of policy $\pi$. 
  
\textbf{State distribution estimate oracle:} A state
      distribution oracle estimates the state distribution
      $\hat{d}_\pi=\textsc{DensityEst}(\pi,\eps)$ of any given (non-stationary)
      policy $\pi$, guaranteeing that $\|d_{\pi}-\hat{d}_\pi\|_\infty\leq
      \varepsilon$.

\ignore{
\begin{enumerate}
\item \textbf{Approximate planning oracle:} Given a reward function
  (on states) and desired accuracy, 
  the planning oracle returns a stationary\footnote{As
  the oracle is solving a discounted problem, we know the optimal
  value is achieved by a stationary policy.} policy that is approximately optimal according to the given reward and accuracy. 
  
    \item \textbf{State distribution estimate oracle:} Given a policy and desired accuracy, this oracle estimates the state distribution
      of the given policy up to the accuracy. 
\end{enumerate}
}      

Given access to these two oracles, we describe a method that provably optimizes any continuous and smooth objective over the state-visitation frequencies. Of special interest is the maximum entropy and relative entropy objectives.

\begin{theorem}[Main Theorem - Informal] \label{main-informal} There exists an efficient algorithm (Algorithm \ref{mainA}) such that for any $\beta$-smooth measure $R$, and any $\varepsilon>0$, 
in $O(\frac{1}{\varepsilon} \log \frac{1}{\varepsilon} ) $ calls to $\textsc{ApproxPlan}$ \& \textsc{DensityEst} , it returns a policy $\bar{\pi}$  with 
\[
R(d_{\bar{\pi}}) \geq \max_\pi R(d_\pi) - \varepsilon \, .
\] 
\end{theorem}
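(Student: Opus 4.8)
The plan is to recognize the problem as a concave maximization over the convex set $\mathcal{K} = \{d_\pi : \pi\}$ of all achievable state-visitation distributions, and to solve it with the conditional gradient (Frank--Wolfe) method, using the two oracles to implement the two primitives that Frank--Wolfe needs. First I would argue $\mathcal{K}$ is convex: given policies $\pi_1,\pi_2$ inducing $d_1,d_2$, the non-stationary policy that flips a coin at the start and commits to $\pi_i$ induces $\alpha d_1 + (1-\alpha)d_2$, so every convex combination of achievable distributions is itself achievable. The crucial observation is that for a linear reward $r$ the value of a policy is $V(\pi) = \langle r, d_\pi\rangle$, so $\textsc{ApproxPlan}(r,\cdot)$ is exactly an (approximate) linear-maximization oracle $\max_{d\in\mathcal{K}}\langle r, d\rangle$ over $\mathcal{K}$ --- the only per-step subproblem Frank--Wolfe must solve --- while $\textsc{DensityEst}$ supplies the function and gradient evaluations.

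Concretely, I would maintain the iterate as a mixture policy $\bar\pi_t$ with induced distribution $d_t$. At step $t$: (i) call $\hat d_t = \textsc{DensityEst}(\bar\pi_t,\varepsilon_1)$; (ii) form the reward $r_t = \nabla R(\hat d_t)$ and call $\pi_t = \textsc{ApproxPlan}(r_t,\varepsilon_0)$, an approximate maximizer of $\langle r_t, d\rangle$ over $\mathcal{K}$; (iii) update $\bar\pi_{t+1} = (1-\eta)\bar\pi_t + \eta\,\pi_t$, which by the convexity argument induces $d_{t+1} = (1-\eta)d_t + \eta\,d_{\pi_t}$. The output is the final mixture $\bar\pi_T$. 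I would use a \emph{constant} step size $\eta$ rather than the textbook $\eta_t = 2/(t+2)$, and this choice is what produces the stated $\log(1/\varepsilon)$ factor.

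For the convergence analysis, I would start from $\beta$-smoothness,
\[
R(d_{t+1}) \ge R(d_t) + \eta\,\langle \nabla R(d_t),\, d_{\pi_t}-d_t\rangle - \tfrac{\beta\eta^2}{2}\|d_{\pi_t}-d_t\|^2 ,
\]
and write $d^\star = \argmax_{d\in\mathcal{K}} R(d)$. Concavity gives $\langle \nabla R(d_t), d^\star - d_t\rangle \ge R(d^\star) - R(d_t)$, and the approximate optimality of $\pi_t$ against the perturbed gradient $\nabla R(\hat d_t)$ gives $\langle \nabla R(d_t), d_{\pi_t}\rangle \ge \langle \nabla R(d_t), d^\star\rangle - \mathrm{err}$, where $\mathrm{err}$ collects the planning suboptimality $\varepsilon_0$ and the gradient-mismatch term $\langle \nabla R(d_t)-\nabla R(\hat d_t),\, d_{\pi_t}-d^\star\rangle$, which $\beta$-smoothness bounds by $O(\beta\,\varepsilon_1\,D)$ with $D$ the diameter of $\mathcal{K}$. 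Setting $h_t = R(d^\star)-R(d_t)$ yields the recursion $h_{t+1} \le (1-\eta)h_t + \eta\,\mathrm{err} + \tfrac{\beta\eta^2 D^2}{2}$, and unrolling gives $h_T \le (1-\eta)^T h_0 + \mathrm{err} + \tfrac{\beta\eta D^2}{2}$. Choosing $\eta = \Theta(\varepsilon/(\beta D^2))$ and oracle accuracies $\varepsilon_0,\varepsilon_1 = \Theta(\varepsilon)$ drives the last two terms below $\varepsilon$, and the geometric term $(1-\eta)^T h_0$ falls below $\varepsilon$ once $T = \Theta(\tfrac{1}{\eta}\log\tfrac{h_0}{\varepsilon}) = O(\tfrac1\varepsilon \log\tfrac1\varepsilon)$, which is the claimed number of oracle calls.

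The main obstacle I anticipate is the error propagation, not the clean Frank--Wolfe skeleton. The delicate point is step (ii): we never observe $d_t$ exactly, so we evaluate the gradient at $\hat d_t$ and hand the planning oracle the ``wrong'' linear functional; I must verify that $\beta$-smoothness converts the $\ell_\infty$ density error into a controlled additive loss in the per-step progress, and that these errors do not compound across the $T$ iterations --- they enter additively with weight $\eta$, which is precisely what makes a constant-size $\mathrm{err}$ tolerable. A secondary concern is bounding the diameter $D$ and ensuring $\nabla R$ stays bounded along the trajectory; for the headline entropy objective this fails at the boundary of the simplex, so in that instantiation one would keep the iterates in the interior (e.g.\ by mixing in a small amount of uniform mass), but under the theorem's standing hypothesis that $R$ is genuinely $\beta$-smooth this difficulty is assumed away.
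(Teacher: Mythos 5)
Your proposal is correct and follows essentially the same route as the paper's proof of its Main Theorem: a constant-step-size Frank--Wolfe scheme over the convex set of achievable state distributions, with \textsc{ApproxPlan} serving as the approximate linear-optimization oracle, \textsc{DensityEst} supplying the gradient point, smoothness converting the $\ell_\infty$ density error into an additive per-step loss, and the resulting contraction $h_{t+1}\le(1-\eta)h_t+\eta\,\mathrm{err}+O(\beta\eta^2)$ unrolled to give $T=O(\tfrac1\varepsilon\log\tfrac1\varepsilon)$. The only cosmetic difference is that the paper states its parameter choices with the density-oracle accuracy scaled as $\Theta(\varepsilon/\beta)$ (which your error term $O(\beta\varepsilon_1 D)$ implicitly requires as well), and it notes explicitly that the comparison against $\pi^*$ is valid even for non-stationary optimal policies since the planner optimizes over all policies.
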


\subsection{Related work} \label{subsec:prior}
We review related works in this section.

\textbf{Reward Shaping \& Imitation Learning:} Direct optimization approaches to RL (such as policy gradient methods)  tend to perform favorably when random sequences of actions lead the agent to some positive reward, but tend to fail when the rewards are sparse or myopic. Thus far, the most practical approaches to address this have either been through some carefully constructed reward shaping (e.g. ~\cite{Ng1999PolicyIU} where dense reward functions are provided to make the optimization problem more tractable) or through inverse reinforcement learning and imitation learning~\cite{Abbeel2004ApprenticeshipLV,Ross2011ARO} (where an expert demonstrates to the agent how to act).

\textbf{PAC Learning:} For the case of tabular Markov decision processes,  the balance of exploration and exploitation has been addressed in that there are a number of methods which utilize confidence based reward bonuses to encourage exploration in order to ultimately behave near optimally
\cite{kearns2002near,kakade2003sample,strehl2006pac, lattimore2014near, dann2015sample, szita2010model, azar2017minimax}. 

\textbf{Count-based Models \& Directed Exploration:} There are a host of recent empirical success using deep RL methods which encourage exploration in some form\cite{mnih2015human,silver2016mastering}.  
The approaches which encourage exploration are based on a few related ideas: that of encouraging encouraging exploration through state visitation frequencies (e.g. \cite{DBLP:conf/icml/OstrovskiBOM17,DBLP:conf/nips/BellemareSOSSM16,NIPS2017_6868}) and those based on a intrinsic reward signal derived from novelty or prediction error~\cite{NIPS2012_4642, pathakICMl17curiosity,DBLP:journals/corr/abs-1810-02274,NIPS2017_6851,NIPS2015_5668,NIPS2016_6591, weber2017imagination}, aligning an intrinsic reward to the target objective~\cite{Kaelbling93b,NIPS2004_2552,Singh2010IntrinsicallyMR,Singh_wheredo,Zheng2018OnLI}, or sample based approaches to tracking of value function uncertainty~\cite{NIPS2016_6501,NIPS2018_8080}.

\textbf{Intrinsic Learning:} Works in~\cite{NIPS2004_2552,Singh_wheredo,Singh2010IntrinsicallyMR} established computational theories of intrinsic reward signals (and how it might help with downstream learning of tasks) and other works also showed how to incorporate intrinsic rewards (in the absence of any true reward signal) ~\cite{DBLP:journals/corr/abs-1811-11359,RandNetDist,pathak18largescale,DBLP:journals/corr/abs-1807-04742}. The potential benefit is that such learning may help the agent reach a variety of achievable goals and do well on other extrinsically defined tasks, not just the task under which it was explicitly trained for under one specific reward function (e.g. see ~\cite{NIPS2004_2552,Singh_wheredo,DBLP:journals/corr/abs-1811-11359, DBLP:journals/corr/abs-1807-04742}).

\ignore{

\section{Introduction}

In the reinforcement learning (RL) problem, an agents seeks to learn a policy (a mapping from states to actions) which maximizes some notion of long term reward, potentially in a setting where the agent does not know the environment.  Direct optimization approaches to this problem (such as the common policy gradient methods used in deep learning)  tend to perform favorably when random sequences
of actions lead the agent to some reward, but tend to fail when the rewards may be difficult to find by random search (such as cases where the reward function is sparse in the state space). 
Thus far, the most practical approaches to address this have either been through some carefully constructed reward shaping (e.g. ~\cite{Ng1999PolicyIU} where dense reward functions are provided to make the optimization problem more tractable) or through inverse reinforcement learning and imitation learning~\cite{Abbeel2004ApprenticeshipLV,Ross2011ARO} (where an expert demonstrates to the agent how to act).

In theory, for the case of tabular Markov decision processes,  the balance of exploration and exploitation has been addressed in that there are a number of methods which utilize confidence based reward bonuses to encourage exploration in order to ultimately behave near optimally
\cite{kearns2002near,kakade2003sample,strehl2006pac, lattimore2014near, dann2015sample, szita2010model, azar2017minimax}. 
There are a host of recent empirical success using deep RL methods which encourage exploration in some form\cite{mnih2015human,silver2016mastering}.  
The approaches which encourage exploration are based on a few related ideas: that of encouraging encouraging exploration through state visitation frequencies (e.g. \cite{DBLP:conf/icml/OstrovskiBOM17,DBLP:conf/nips/BellemareSOSSM16,NIPS2017_6868}) and those based on a intrinsic reward signal derived from novelty or prediction error~\cite{NIPS2012_4642, pathakICMl17curiosity,DBLP:journals/corr/abs-1810-02274,NIPS2017_6851,NIPS2015_5668,NIPS2016_6591, weber2017imagination}, aligning an intrinsic reward to the target objective~\cite{Kaelbling93b,NIPS2004_2552,Singh2010IntrinsicallyMR,Singh_wheredo,Zheng2018OnLI}, or sample based approaches to tracking of value function uncertainty~\cite{NIPS2016_6501,NIPS2018_8080}.

More generally, there may be value in understanding how to focus the learning such that the agent can master how to manipulate the environment in a sense that is more general than just optimizing a single scalar reward function. In particular, it may be insightful to understand if there are \emph{intrinsic} learning problems to focus on in the absence of any extrinsic scalar reward signal, where this intrinsic learning problem encourages the agent to find policies which can manipulate its environment. Works in~\cite{NIPS2004_2552,Singh_wheredo,Singh2010IntrinsicallyMR} established computational theories of intrinsic reward signals (and how it might help with downstream learning of tasks) and other works also showed how to incorporate intrinsic rewards (in the absence of any true reward signal) ~\cite{DBLP:journals/corr/abs-1811-11359,RandNetDist,pathak18largescale,DBLP:journals/corr/abs-1807-04742}. The potential benefit is that such learning may help the agent reach a variety of achievable goals and do well on other extrinsically defined tasks, not just the task under which it was explicitly trained for under one specific reward function (e.g. see ~\cite{NIPS2004_2552,Singh_wheredo,DBLP:journals/corr/abs-1811-11359, DBLP:journals/corr/abs-1807-04742}).

The majority of provably efficient methods for reinforcement learning are restricted to the setting where the underlying objective is that of maximizing the (long term) expected reward. In the absence of an extrinsic reward signal, it may be natural to understand if there are other provably efficient methods for which an agent can learn to manipulate its environment based on an intrinsic optimization objective. This is the  focus of this work, where we consider a wider class of objective functions based on entropic measures of the visitation distribution of the state space (as opposed to focusing on maximization of a scalar reward function).

Concretely, this work focuses on the problem of learning a (possibly non-stationary) policy which induces a distribution over the state space that is as uniform as possible,  which can be measured in an entropic sense.  Although we show the corresponding mathematical program is non-convex, our main contribution is in providing an efficient learning algorithm for computing an ($\eps$-approximate) maximum entropy policy, in settings where the model is either known or unknown.  Key to our algorithmic methodology is utilizing the conditional gradient method\footnote{For detailed description of the Frank-Wolfe method as well as its online variant
see \cite{OPT-013} chapter 7.} (a.k.a. the Frank-Wolfe algorithm~\cite{frank1956algorithm}) in order to solve a certain sub-problem. While we focus on this particular entropic measure, generalizations are possible. 

}

\section{Preliminaries}
\textbf{Markov decision process:} An infinite-horizon discounted Markov Decision Process is a tuple $\mathcal{M}=(\mathcal{S}, \mathcal{A}, r, P, \gamma, d_0)$, where $\mathcal{S}$ is the set of states, $\mathcal{A}$ is the set of actions, and $d_0$ is the distribution of of the initial state $s_0$. At each timestep $t$, upon observing the state $s_t$, the execution of action $a_t$ triggers an observable reward of $r_t = r(s_t, a_t)$ and a transition to a new state $s_{t+1}\sim P(\cdot|s_t, a_t)$. The performance on an infinite sequence of states \& actions (hereafter, referred to as a \textit{trajectory}) is judged through the (discounted) cumulative reward it accumulates, defined as
\[ V(\tau=(s_0, a_0, s_1, a_1, \dots)) = (1-\gamma) \sum_{t=0}^\infty \gamma^t r(s_t, a_t). \]

\textbf{Policies:} A policy is a (randomized) mapping from a history, say $(s_0, a_0,r_0, s_1, a_1,r_1 \dots s_{t-1}, a_{t-1},r_{t-1})$, to an action $a_t$. A stationary policy $\pi$ is a (randomized) function which maps a state to an action in a time-independent manner, i.e. $\pi:\mathcal{S}\to \Delta(\mathcal{A})$. When a policy $\pi$ is executed on some MDP $\mathcal{M}$, it produces a distribution over infinite-length trajectories $\tau=(s_0, a_0, s_1, a_1 \dots )$ as specified below.
\[ P(\tau| \pi) = P(s_0) \prod_{i=0}^\infty (\pi(a_i|s_i) P(s_{i+1}|s_i, a_i)) \]
The (discounted) value $V_\pi$ of a policy $\pi$ is the expected cumulative reward an action sequence sampled from the policy $\pi$ gathers.
\[ V_\pi = \mathop{\mathbb{E}}_{\tau\sim P(\cdot|\pi)} V(\tau) = (1-\gamma)\mathop{\mathbb{E}}_{\tau\sim P(\cdot|\pi)} \sum_{t=0}^\infty \gamma^t r(s_t, a_t)\]

\ignore{While only relevant to Section~\ref{secORACONS}, define $V_\pi(s), Q_\pi(s,a)$ to be 
\begin{align}
    V_\pi(s) &= \mathop{\mathbb{E}}_{\tau\sim P(\cdot|\pi)} [V(\tau) | s_0=s]\\
    Q_\pi(s,a) &= \mathop{\mathbb{E}}_{\tau\sim P(\cdot|\pi)} [V(\tau) | s_0=s, a_0=a]
\end{align}
Also, define $V^*(s)=\max_\pi V_\pi(s)$ and $Q^*(s,a)=\max_\pi Q_\pi({s,a})$.}

\textbf{Induced state distributions:} The $t$-step state distribution and the (discounted) state distribution of a policy $\pi$ that result are
\begin{align} 
d_{t,\pi}(s) &= P(s_t=s|\pi) = \sum_{\text{all }\tau \text{ with } s_t=s} P(\tau|\pi), \\
d_{t,\pi}(s,a) &= P(s_t=s,a_t=a|\pi) = \sum_{\text{all }\tau \text{ with } s_t=s,a_t=a} P(\tau|\pi), \\
d_\pi (s) &= (1-\gamma) \sum_{t=1}^\infty \gamma^t d_{t,\pi}(s),\\
d_\pi (s,a) &= (1-\gamma) \sum_{t=1}^\infty \gamma^t d_{t,\pi}(s,a).
\end{align}
The latter distribution can be viewed as the analogue of the stationary distribution in the infinite horizon setting.

\textbf{Mixtures of stationary policies:} Given a sequence of $k$ policies $C=(\pi_0, \dots \pi_{k-1})$, and $\alpha\in \Delta_k$, we define $\pi_{\textrm{mix}}=(\alpha, C)$ to be a mixture over stationary policies. The (non-stationary) policy
$\pi_{\textrm{mix}}$ is one where, at the first timestep $t=0$, we sample policy
$\pi_i$ with probability $\alpha_i$ and then use this policy for all
subsequent timesteps.
In particular, the behavior of a mixture
$\pi_{\textrm{mix}}$ with respect to an MDP is that it induces infinite-length trajectories $\tau=(s_0, a_0, s_1, a_1 \dots )$ with the probability
law :
\begin{equation}
    P(\tau| \pi_{\textrm{mix}}) = \sum_{i=0}^{k-1} \alpha_i P(\tau|\pi_i) \label{eqMIX}
  \end{equation}  and the induced state distribution is:
\begin{equation}
d_{\pi_{\textrm{mix}}} (s) = \sum_{i=0}^{k-1} \alpha_i d_{\pi_i} (s).
\label{d_of_mix}
\end{equation}
Note that such a distribution over policies need not be representable
as a stationary stochastic policy (even if the $\pi_i$'s are
stationary) due to that the sampled actions are no
longer conditionally independent given the states.

\section{The Objective: MaxEnt Exploration}
As each policy induces a distribution over states, we can associate a \emph{concave} reward functional $R(\cdot)$ with this induced distribution. We say that a policy $\pi^*$ is a \emph{maximum-entropy exploration policy}, also to referred to as the \emph{max-ent} policy,  if the corresponding induced state distribution has the maximum possible $R(d_\pi)$ among the class of all policies. Lemma~\ref{l-mdp-stat} assures us that the search over the class of stationary policies is sufficient.
\[\pi^* \in \argmax_{\pi} R(d_\pi).\]
Our goal is to find a policy that induces a state distribution with a comparable value of the reward functional.  

\subsection{Examples of reward functionals}
A possible quantity of interest that serves as a motivation for considering such functionals is the entropy of the induced distribution.
\[
  \max_\pi \{ H(d_\pi) = - \mathop{\mathbb{E}}_{s\sim d_\pi} \log  d_\pi(s)\} \, 
\]
The same techniques we derive can 
also be used to optimize other entropic measures. For example, we may be interested in minimizing:
\[
\min_\pi \left\{\textrm{KL}(d_\pi||Q) = \mathop{\mathbb{E}}_{s\sim d_\pi} \log  \frac{d_\pi(s)}{Q(s)}\right\} \,
\]
for some given distribution $Q(s)$.
Alternatively, we may seek to minimize a cross entropy measure:
\[
\min_\pi \left\{\mathop{\mathbb{E}}_{s\sim Q} \log  \frac{1}{d_\pi(s)} = \textrm{KL}(Q||d_\pi)+H(Q)\right\} \,
\]
where the expectation is now under $Q$. For uniform $Q$, this latter measure may be more aggressive in forcing $\pi$ to have more uniform coverage than the entropy objective.

\subsection{Landscape of the objective function}
In this section, we establish that the entropy of the state distribution is \emph{not} a concave function of the policy. Similar constructions can establish analogous statements for other non-trivial functionals. Subsequently, we discuss a possible convex reformulation of the objective in the space of induced distributions which constitute a convex set.

\subsubsection{Non-convexity in the policy space}\label{nc}
Despite the concavity of the entropy function, our overall maximization problem is not concave as the state distribution is not an affine function of the policy.
This is stated precisely in the following lemma.
\begin{lemma}
$H(d_\pi)$ is not concave in $\pi$.
\end{lemma}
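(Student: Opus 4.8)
The plan is to prove non-concavity by the most direct route: exhibit a single finite MDP and two stationary policies $\pi_1,\pi_2$ for which the ``policy-averaged'' policy $\bar\pi = \tfrac12(\pi_1+\pi_2)$ (averaging the action distributions state-by-state) has state-entropy strictly below the average of the endpoint entropies, i.e. $H(d_{\bar\pi}) < \tfrac12\big(H(d_{\pi_1})+H(d_{\pi_2})\big)$. The conceptual engine, as the surrounding text hints, is that $d_\pi$ is non-affine in $\pi$: the probability of reaching a state is a \emph{sum of products} of action-probabilities over the paths leading to it. The key design insight is that a bare product structure is not enough to break concavity — one must route two distinct intermediate states into a \emph{common} successor, so that some terminal mass becomes a genuinely bilinear expression of the form $xy+(1-x)z$, whose composition with the concave binary entropy fails to be concave.

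Concretely, I would use the five-state MDP with start state $s_0$, intermediate states $s_1,s_2$, and absorbing states $s_3,s_4$, with deterministic transitions: from $s_0$, action $L$ goes to $s_1$ and $R$ to $s_2$; from each of $s_1,s_2$, action $L$ goes to $s_3$ and $R$ to $s_4$. Parametrize a stationary policy by $(x,y,z)=(\pi(L\mid s_0),\pi(L\mid s_1),\pi(L\mid s_2))$. Working in the limit $\gamma\to 1$, the discounted state distribution concentrates on the absorbing states, placing mass $p = xy+(1-x)z$ on $s_3$ and $1-p$ on $s_4$, so that $H(d_\pi)\to H_b(p)$ where $H_b(p)=-p\ln p-(1-p)\ln(1-p)$. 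Now take
\[
\pi_1=\left(0,\ 1,\ \tfrac12\right),\qquad \pi_2=\left(1,\ \tfrac12,\ 1\right),\qquad \bar\pi=\left(\tfrac12,\ \tfrac34,\ \tfrac34\right).
\]
These give $p_1=p_2=\tfrac12$, hence $H(d_{\pi_1})=H(d_{\pi_2})=\ln 2$, while $\bar p = \tfrac12\cdot\tfrac34+\tfrac12\cdot\tfrac34=\tfrac34$, hence $H(d_{\bar\pi})=H_b(\tfrac34)=2\ln 2-\tfrac34\ln 3\approx 0.562 < \ln 2 \approx 0.693$. This is the desired strict violation of concavity. Note the mechanism explicitly: the coordinates $y$ and $z$ are irrelevant at the endpoints (the corresponding intermediate states are never reached there), yet they become active under averaging, which is exactly the non-affine effect that fails concavity.

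The main obstacle is not the algebra but getting the structure right, and I would flag two care-points. First, as I noted, the ``obvious'' tree MDP in which each state is reached by a \emph{unique} path does not work: there the entropy reduces to $H_b(x)+x\,H_b(y)$, which a short Hessian check shows is genuinely concave, so the non-affineness must be of the merging-paths type to overcome the concavity of $H$. Second, the computation above is stated in the $\gamma\to 1$ limit, where transient mass on $s_0,s_1,s_2$ vanishes; for finite $\gamma$ the full $d_\pi$ also charges these transient states. Since $H$ is continuous and the violation above holds with a fixed margin ($0.562$ versus $0.693$), the inequality persists for all $\gamma$ sufficiently close to $1$ by continuity of $d_\pi$ in $\gamma$, which is the only technical step requiring (routine) justification.
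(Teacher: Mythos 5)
Your proof is correct and takes essentially the same route as the paper's: both exhibit a small layered MDP with paths merging into absorbing states, take two stationary policies whose limiting ($\gamma\to 1$) state distributions are uniform on the absorbing states, and show their state-wise average policy induces a non-uniform distribution, hence strictly lower entropy. Your version (two absorbing states, entropy $\ln 2$ vs.\ $H_b(3/4)$) is a minor variant of the paper's (three absorbing states, entropy $\ln 3$ at the endpoints), and you additionally make explicit the continuity-in-$\gamma$ step that the paper leaves implicit.
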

\begin{proof}
\begin{figure}[t!]
    \centering 
    \begin{subfigure}[b]{0.3\textwidth}
    \scalebox{0.7}{
        \begin{tikzpicture}[shorten >=1pt,node distance=2cm,on grid,auto] 
            \node[state,initial] (s_{0,0})   {$s_{0,0}$}; 
            \node[state] (s_{1,0}) [below left=of s_{0,0}] {$s_{1,0}$}; 
            \node[state] (s_{1,1}) [below right=of s_{0,0}] {$s_{1,1}$}; 
            \node[state] (s_{2,0}) [below left=of s_{1,0}] {$s_{2,0}$}; 
            \node[state] (s_{2,1}) [below right=of s_{1,0}] {$s_{2,1}$}; 
            \node[state] (s_{2,2}) [below right=of s_{1,1}] {$s_{2,2}$}; 
            \path[->] 
            (s_{0,0}) edge  node {$\frac{1}{2}$} (s_{1,0})
                  edge  node [swap] {$\frac{1}{2}$} (s_{1,1})
            (s_{1,0}) edge  node {$\frac{3}{4}$} (s_{2,0})
                  edge  node [swap] {$\frac{1}{4}$} (s_{2,1})
            (s_{1,1}) edge  node {$\frac{1}{4}$} (s_{2,1})
                  edge  node [swap] {$\frac{3}{4}$} (s_{2,2})
            (s_{2,0}) edge [loop below] node {$1$} (s_{2,0})
            (s_{2,1}) edge [loop below] node {$1$} (s_{2,1})
            (s_{2,2}) edge [loop below] node {$1$} (s_{2,2});
        \end{tikzpicture}}
        \caption{$\pi_0$, $d_{2,\pi_0}=\left(\frac{3}{8},\frac{1}{4},\frac{3}{8}\right)$}
    \end{subfigure}
    \begin{subfigure}[b]{0.3\textwidth}
    \scalebox{0.7}{
        \begin{tikzpicture}[shorten >=1pt,node distance=2cm,on grid,auto] 
            \node[state,initial] (s_{0,0})   {$s_{0,0}$}; 
            \node[state] (s_{1,0}) [below left=of s_{0,0}] {$s_{1,0}$}; 
            \node[state] (s_{1,1}) [below right=of s_{0,0}] {$s_{1,1}$}; 
            \node[state] (s_{2,0}) [below left=of s_{1,0}] {$s_{2,0}$}; 
            \node[state] (s_{2,1}) [below right=of s_{1,0}] {$s_{2,1}$}; 
            \node[state] (s_{2,2}) [below right=of s_{1,1}] {$s_{2,2}$}; 
            \path[->] 
            (s_{0,0}) edge  node {$\frac{2}{3}$} (s_{1,0})
                  edge  node [swap] {$\frac{1}{3}$} (s_{1,1})
            (s_{1,0}) edge  node {$\frac{1}{2}$} (s_{2,0})
                  edge  node [swap] {$\frac{1}{2}$} (s_{2,1})
            (s_{1,1}) edge  node {$0$} (s_{2,1})
                  edge  node [swap] {$1$} (s_{2,2})
            (s_{2,0}) edge [loop below] node {$1$} (s_{2,0})
            (s_{2,1}) edge [loop below] node {$1$} (s_{2,1})
            (s_{2,2}) edge [loop below] node {$1$} (s_{2,2});
        \end{tikzpicture}}
        \caption{$\pi_1$, $d_{2,\pi_1}=\left(\frac{1}{3},\frac{1}{3},\frac{1}{3}\right)$}
    \end{subfigure}
    \begin{subfigure}[b]{0.3\textwidth}
    \scalebox{0.7}{
        \begin{tikzpicture}[shorten >=1pt,node distance=2cm,on grid,auto] 
            \node[state,initial] (s_{0,0})   {$s_{0,0}$}; 
            \node[state] (s_{1,0}) [below left=of s_{0,0}] {$s_{1,0}$}; 
            \node[state] (s_{1,1}) [below right=of s_{0,0}] {$s_{1,1}$}; 
            \node[state] (s_{2,0}) [below left=of s_{1,0}] {$s_{2,0}$}; 
            \node[state] (s_{2,1}) [below right=of s_{1,0}] {$s_{2,1}$}; 
            \node[state] (s_{2,2}) [below right=of s_{1,1}] {$s_{2,2}$}; 
            \path[->] 
            (s_{0,0}) edge  node {$\frac{1}{3}$} (s_{1,0})
                  edge  node [swap] {$\frac{2}{3}$} (s_{1,1})
            (s_{1,0}) edge  node {$1$} (s_{2,0})
                  edge  node [swap] {$0$} (s_{2,1})
            (s_{1,1}) edge  node {$\frac{1}{2}$} (s_{2,1})
                  edge  node [swap] {$\frac{1}{2}$} (s_{2,2})
            (s_{2,0}) edge [loop below] node {$1$} (s_{2,0})
            (s_{2,1}) edge [loop below] node {$1$} (s_{2,1})
            (s_{2,2}) edge [loop below] node {$1$} (s_{2,2});
        \end{tikzpicture}}
        \caption{$\pi_2$, $d_{2,\pi_2}=\left(\frac{1}{3},\frac{1}{3},\frac{1}{3}\right)$}
    \end{subfigure}
    \caption{Description of $\pi_0,\pi_1,\pi_2$.\label{fig1}}
\end{figure}
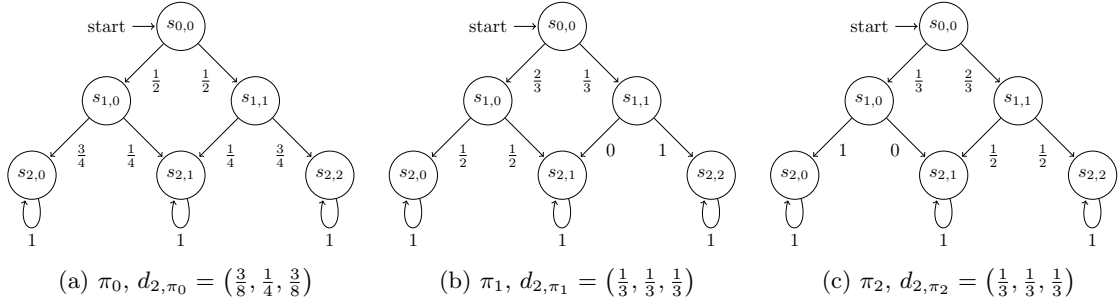
Figure~\ref{fig1} demonstrates the behavior of $\pi_0, \pi_1,\pi_2$ on a 6-state MDP with binary actions. Note that for sufficiently large $\gamma\to 1$ and any policy $\pi$, the discounted state distribution converges to the distribution on the states at the second timestep, or formally $d_{\pi}\to d_{2,\pi}$. Now with the realization $\pi_0=\frac{\pi_1+\pi_2}{2}$, observe that $d_{2,\pi_0}$ is not uniform on $\{s_{2,0},s_{2,1}, s_{2,2}\}$, implying that $H(d_{2,\pi_0})  < \frac{H(d_{2,\pi_1})+H(d_{2,\pi_2})}{2}$. 
\end{proof}

\begin{lemma}\label{lemDPI}
For any policy $\pi$ and MDP $\mathcal{M}$, define the matrix $P_\pi\in\reals^{|\mathcal{S}|\times |\mathcal{S}|}$ so that 
\[P_\pi(s',s) = \sum_{a\in \mathcal{A}} \pi(a|s)P(s'|s,a).\]
Then it is true that
\begin{enumerate}
    \item $P_\pi$ is linear in $\pi$,
    \item $d_{t,\pi} = P_\pi^t d_0$ for all $t\geq 0$,
    \item $d_\pi = (1-\gamma) (I-\gamma P_\pi)^{-1} d_0$.
\end{enumerate}
\end{lemma}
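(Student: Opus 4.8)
The plan is to prove the three claims in the order stated, with the first two being essentially bookkeeping from the definitions and the third reducing to the convergence of a matrix geometric (Neumann) series.

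For claim (1), I would read the assertion straight off the defining formula. Fixing the MDP makes the transition probabilities $P(s'|s,a)$ constants, so each entry $P_\pi(s',s) = \sum_{a\in\mathcal{A}} \pi(a|s) P(s'|s,a)$ is a fixed linear combination of the policy values $\{\pi(a|s)\}_{a\in\mathcal{A}}$. Since every entry of $P_\pi$ is (homogeneously) linear in the coordinates of $\pi$, the map $\pi \mapsto P_\pi$ is linear. For claim (2), I would induct on $t$. The base case $t=0$ holds because $d_{0,\pi}(s) = P(s_0=s\mid\pi) = d_0(s)$ while $P_\pi^0 = I$. For the inductive step, assuming $d_{t,\pi} = P_\pi^t d_0$, I would expand the $(t+1)$-step distribution by conditioning on the state and action at time $t$ and invoking the Markov property:
\[ d_{t+1,\pi}(s') = \sum_{s}\sum_{a} P(s'|s,a)\,\pi(a|s)\, d_{t,\pi}(s) = \sum_s P_\pi(s',s)\, d_{t,\pi}(s) = (P_\pi d_{t,\pi})(s'). \]
Hence $d_{t+1,\pi} = P_\pi d_{t,\pi} = P_\pi^{t+1} d_0$, closing the induction.

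For claim (3), I would substitute claim (2) into the definition of the discounted state distribution (read as the standard occupancy, i.e. the sum starting at $t=0$, which is what yields the stated identity) and factor the matrix out of the sum:
\[ d_\pi = (1-\gamma)\sum_{t\geq 0} \gamma^t d_{t,\pi} = (1-\gamma)\sum_{t\geq 0} (\gamma P_\pi)^t\, d_0. \]
The crux is then to identify $\sum_{t\geq 0}(\gamma P_\pi)^t$ with $(I-\gamma P_\pi)^{-1}$. This is the one place that requires a genuine argument rather than unwinding definitions: I would observe that $P_\pi$ is column-stochastic, since $\sum_{s'} P_\pi(s',s) = \sum_a \pi(a|s)\sum_{s'} P(s'|s,a) = \sum_a \pi(a|s) = 1$, so its induced $\ell_1\to\ell_1$ operator norm equals $1$ and its spectral radius is at most $1$. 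Consequently $\gamma P_\pi$ has spectral radius at most $\gamma < 1$, which simultaneously guarantees that $I-\gamma P_\pi$ is invertible and that the Neumann series $\sum_{t\geq 0}(\gamma P_\pi)^t$ converges to its inverse. Plugging this in gives $d_\pi = (1-\gamma)(I-\gamma P_\pi)^{-1} d_0$.

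The main (and only mildly technical) obstacle is precisely this convergence-and-invertibility step in claim (3), which hinges on the column-stochasticity of $P_\pi$ together with $\gamma<1$; everything else follows immediately from the definitions and a one-line induction.
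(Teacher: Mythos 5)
Your proof is correct and fills in exactly the calculation that the paper leaves implicit (its entire proof of this lemma is ``Linearity of $P_\pi$ is evident from the definition. (2,3) may be verified by calculation.''): linearity by inspection, induction via the Markov property for claim (2), and the Neumann series with column-stochasticity of $P_\pi$ for claim (3). You also correctly and transparently resolved the paper's off-by-one inconsistency in the definition of $d_\pi$ --- the sum must start at $t=0$ (as your reading assumes), since the paper's stated $t=1$ start would make the weights sum to $\gamma$ rather than $1$ and would yield $(1-\gamma)\left[(I-\gamma P_\pi)^{-1}-I\right]d_0$ instead of the identity claimed in the lemma.
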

\begin{proof}
Linearity of $P_\pi$ is evident from the definition. (2,3) may be verified by calculation.
\end{proof}

\subsubsection{Convexity in the distribution space}\label{conv}
Define the set of all induced distributions as $\K=\{d: d(s,a)\geq 0\text{ and satisfies the constraints stated below}\}$. For every $d\in \K$, it is possible to construct a policy $\pi$ with $d_\pi=d$, and for every $\pi$, $d_\pi\in \K$ holds \cite{puterman2014markov}.
\begin{align*}
    \sum_{a} d(s,a) &= (1-\gamma)d_0(s) + \gamma \sum_{s',a'} P(s|s',a') d(s',a')  
\end{align*}
The search for a max-ent policy can be recast as a convex optimization problem over the space of distributions.
\[ \max_{d\in \K} R(d). \]
Although we outline this reduction for an unrestricted policy class, similar reductions are possible for linearly-parameterized policy classes. These techniques can be extended to the case of MDPs with unknown dynamics \cite{de2003linear}.

\subsubsection{Sufficiency of Stationary Policies}
The set of non-Markovian policies is \emph{richer} than the set of Markov stationary policies in terms of the distributions over trajectories each may induce. A priori, it is not evident that maximizing $R(d_\pi)$ over the set of stationary policies is sufficient to guarantee the optimality in a larger class of all policies. Lemma~\ref{l-mdp-stat} establishes this claim by equating the set of achievable \emph{induced state distributions} for these two sets of policies.

\begin{lemma}\label{l-mdp-stat}\cite{puterman2014markov}
For any possibly non-Markovian policy $\pi$, define a stationary Markov policy $\pi'$ as $\pi'(a|s) = \frac{d_{\pi}(s,a)}{d_{\pi}(s)}$. Then, $d_{\pi}=d_{\pi'}$.
\end{lemma}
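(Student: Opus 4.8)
The plan is to show that $d_\pi$ satisfies the very same linear fixed-point equation that characterizes $d_{\pi'}$, and then to invoke uniqueness of its solution. Concretely, I would first establish the flow (balance) equation obeyed by the discounted state-action occupancy of \emph{any} policy, Markovian or not:
\[ \sum_a d_\pi(s',a) = (1-\gamma)\,d_0(s') + \gamma \sum_{s,a} P(s'|s,a)\, d_\pi(s,a). \]
The derivation expands $d_\pi(s') = (1-\gamma)\sum_t \gamma^t d_{t,\pi}(s')$, peels off the $t=0$ term (which contributes $(1-\gamma)d_0(s')$), substitutes the one-step recursion $d_{t,\pi}(s') = \sum_{s,a}P(s'|s,a)\,d_{t-1,\pi}(s,a)$, and reindexes $t\mapsto t-1$ to recover a factor of $d_\pi(s,a)$. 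The step I expect to demand the most care is justifying this recursion for a \emph{non-Markovian} $\pi$: although the action $a_{t-1}$ may depend on the entire history, the environment's transition into $s_t$ depends only on $(s_{t-1},a_{t-1})$ through $P$, so marginalizing the trajectory law $P(\tau|\pi)$ over all histories that end in the state-action pair $(s,a)$ at time $t-1$ collapses exactly to $P(s'|s,a)$ weighting $d_{t-1,\pi}(s,a)$.

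Next I would substitute the definition $\pi'(a|s) = d_\pi(s,a)/d_\pi(s)$, equivalently $d_\pi(s,a) = \pi'(a|s)\,d_\pi(s)$, declaring $\pi'(\cdot|s)$ arbitrary on the (never-visited, hence immaterial) states with $d_\pi(s)=0$. Plugging this into the flow equation and recognizing $\sum_a \pi'(a|s)P(s'|s,a) = P_{\pi'}(s',s)$ from Lemma~\ref{lemDPI} turns the balance equation into the vector identity
\[ d_\pi = (1-\gamma)\,d_0 + \gamma\, P_{\pi'}\, d_\pi, \]
that is, $(I-\gamma P_{\pi'})\,d_\pi = (1-\gamma)d_0$.

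Finally I would solve for $d_\pi$. Since $P_{\pi'}$ is column-stochastic and $0\le\gamma<1$, the spectral radius of $\gamma P_{\pi'}$ is at most $\gamma<1$, so $I-\gamma P_{\pi'}$ is invertible (equivalently, its Neumann series converges), yielding $d_\pi = (1-\gamma)(I-\gamma P_{\pi'})^{-1}d_0$. But Lemma~\ref{lemDPI}(3), applied to the stationary policy $\pi'$, asserts precisely $d_{\pi'} = (1-\gamma)(I-\gamma P_{\pi'})^{-1}d_0$. Comparing the two expressions gives $d_\pi = d_{\pi'}$, as claimed. The only genuine subtlety lies in the first step; once the occupancy measure is shown to obey the flow equation, the conclusion reduces to uniqueness of the solution of a nonsingular linear system, which is already packaged in Lemma~\ref{lemDPI}.
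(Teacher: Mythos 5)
The paper offers no proof of this lemma at all---it is stated with a citation to \cite{puterman2014markov}---so there is nothing internal to compare against; what matters is whether your argument stands on its own, and it does. Your three steps are exactly the standard route (and the one implicit in the paper's own machinery): (i) the discounted occupancy measure of \emph{any} policy, Markovian or not, satisfies the flow equation, because the environment transitions depend on the history only through the current state--action pair; (ii) substituting $d_\pi(s,a)=\pi'(a|s)\,d_\pi(s)$ turns that equation into $(I-\gamma P_{\pi'})d_\pi=(1-\gamma)d_0$; (iii) since the spectral radius of $\gamma P_{\pi'}$ is at most $\gamma<1$, this nonsingular system has a unique solution, which by Lemma~\ref{lemDPI}(3) is $d_{\pi'}$. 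You correctly isolate the only two delicate points: the one-step recursion for non-Markovian policies (justified by marginalizing the trajectory law over histories, using the Markov property of the \emph{environment} rather than of the policy), and the states with $d_\pi(s)=0$, where $\pi'$ is arbitrary but the corresponding terms vanish because $d_\pi(s)=\sum_a d_\pi(s,a)=0$ forces each $d_\pi(s,a)=0$. Your argument also dovetails with Section~\ref{conv}: the flow equation you derive is precisely the constraint defining the set $\K$ there, so in effect you are proving the direction ``every policy's occupancy lies in $\K$'' together with ``every point of $\K$ is realized by the stationary policy it induces.''

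One bookkeeping caveat: the paper's Preliminaries define $d_\pi(s)=(1-\gamma)\sum_{t=1}^\infty \gamma^t d_{t,\pi}(s)$, with the sum starting at $t=1$ (under which $d_\pi$ does not even normalize to $1$), whereas your derivation peels off a $t=0$ term, i.e., uses the convention $d_\pi=(1-\gamma)\sum_{t\geq 0}\gamma^t d_{t,\pi}$. Your convention is the one consistent with Lemma~\ref{lemDPI}(3) and with the constraint set $\K$ in Section~\ref{conv} (the paper's $t=1$ is evidently a typo), so this is not a gap in your proof, but you should state explicitly which convention you are using, since the lemma's conclusion is an identity between objects whose definition is at stake.
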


\section{Algorithms \& Main Results} \label{sec:alg-main}
The algorithm maintains a distribution over policies, and proceeds by
adding a new policy to the support of the mixture and reweighing the
components. To describe the algorithm, we will utilize access to two kinds of oracles. The constructions for these are detailed in later sections.

\textbf{Approximate planning oracle:} Given a reward function
  (on states) $r:\mathcal{S}\to\reals$ and a sub-optimality gap
  $\varepsilon_1$,
  the planning oracle returns a stationary\footnote{As
  the oracle is solving a discounted problem, we know the optimal
  value is achieved by a stationary policy.} policy $\pi=\textsc{ApproxPlan}(r,\varepsilon_1)$ with
  the guarantee that $V_\pi \geq \max_\pi V_\pi - \varepsilon_1$.
  
\textbf{State distribution estimate oracle:} A state
      distribution oracle estimates the state distribution
      $\hat{d}_\pi=\textsc{DensityEst}(\pi,\eps_0)$ of any given (non-stationary)
      policy $\pi$, guaranteeing that $\|d_{\pi}-\hat{d}_\pi\|_\infty\leq
      \varepsilon_0$.  

\begin{algorithm}[t!]
\caption{Maximum-entropy policy computation.}
\label{mainA}
\begin{algorithmic}[1]
\STATE \textbf{Input:} Step size $\eta$, number
of iterations $T$, planning oracle error tolerance $\varepsilon_1>0$,
state distribution oracle error tolerance $\varepsilon_0>0$, reward functional $R$. 
\STATE Set $C_0=\{\pi_0\}$ where $\pi_0$ is an arbitrary policy.
\STATE Set $\alpha_0=1$. 
\FOR{$t = 0, \ldots, T-1$}
\STATE Call the state distribution oracle on
$\pi_{\textrm{mix},t}=(\alpha_t, C_t)$:
\[  \hat{d}_{\pi_{\textrm{mix},t}}=\textsc{DensityEst}\left(\pi_{\textrm{mix},t},\varepsilon_0\right)\]
\vspace{-0.5cm}
\STATE Define the reward function $r_t$ as 
\[ r_t(s) = \nabla R(\hat{d}_{\pi_{\textrm{mix},t}}) := \frac{dR(X)}{dX}\Bigg\vert_{X=\hat{d}_{\pi_{\textrm{mix},t}}}.\]
\vspace{-0.5cm}
\STATE Compute the (approximately) optimal policy on $r_t$:
\[
  \pi_{t+1} = \textsc{ApproxPlan}\left(r_t, \varepsilon_1 \right)
  \, .
\]
\vspace{-0.5cm}
\STATE Update $\pi_{\textrm{mix},t+1}=(\alpha_{t+1}, C_{t+1})$ to be
\begin{align}
C_{t+1} &= (\pi_0, \dots, \pi_t,\pi_{t+1}), \\
\alpha_{t+1} &= ((1 - \eta) \alpha_{t},\eta).
\end{align}
\ENDFOR
\RETURN $\pi_{\textrm{mix},T} = (\alpha_T, C_T)$. 
\end{algorithmic}
\end{algorithm}

We shall assume in the following discussion that the reward functional $R$ is $\beta$-smooth, $B$-bounded, and that it satisfies the following inequality for all $X,Y$.
\begin{align}\label{asst}
    &\|\nabla R(X)-\nabla R(Y)\|_\infty \leq \beta \|X-Y\|_\infty \\
    &-\beta \mathbb{I} \preceq  \nabla^2 R(X) \preceq \beta \mathbb{I}; \quad \|\nabla R(X)\|_\infty \leq B
\end{align} 

\begin{theorem}[Main Theorem] \label{mainT} For any $\varepsilon>0$, set
  $\varepsilon_1=0.1\varepsilon$,
  $\varepsilon_0 = 0.1\beta^{-1}\varepsilon$, and
  $\eta = 0.1\beta^{-1}\varepsilon$.  When
  Algorithm~\ref{mainA} is run for $T$ iterations where:
  \[
T \geq 10 \beta \varepsilon^{-1} \log 10 B \varepsilon^{-1} \, ,
\]
we have that:
\[
R(\pi_{\textrm{mix},T}) \geq \max_\pi R(d_\pi) - \varepsilon \, .
\] 
\end{theorem}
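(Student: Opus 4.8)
The plan is to recognize Algorithm~\ref{mainA} as an \emph{inexact Frank--Wolfe} (conditional gradient) scheme run on the concave objective $R$ over the convex set $\K$ of achievable state distributions (Section~\ref{conv}), and then to push the standard conditional-gradient analysis through while carefully accounting for the two sources of oracle error.

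First I would establish the central reduction. Writing $d_t := d_{\pi_{\textrm{mix},t}}$ and $v_t := d_{\pi_{t+1}}$, the reweighting rule $\alpha_{t+1} = ((1-\eta)\alpha_t,\eta)$ together with the mixture identity~\eqref{d_of_mix} gives \[ d_{t+1} = (1-\eta) d_t + \eta v_t, \] so the induced distributions follow \emph{exactly} a Frank--Wolfe trajectory in $\K$. Next I would check that $\textsc{ApproxPlan}$ performs approximate linear maximization over $\K$: since the value of any policy under the state reward $r_t$ equals $\langle r_t, d_\pi\rangle$, and since $\K$ is precisely the set of achievable $d_\pi$, the guarantee $V_{\pi_{t+1}}\ge \max_\pi V_\pi - \varepsilon_1$ reads $\langle r_t, v_t\rangle \ge \max_{d\in\K}\langle r_t, d\rangle - \varepsilon_1$. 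Finally I would record the effect of $\textsc{DensityEst}$: with $g_t := \nabla R(d_t)$ the true gradient and $r_t = \nabla R(\hat d_t)$ the one actually used, the $\ell_\infty$ gradient-Lipschitz assumption~\eqref{asst} and $\|\hat d_t - d_t\|_\infty \le \varepsilon_0$ yield $\|r_t - g_t\|_\infty \le \beta\varepsilon_0$.

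Then I would run the descent step. Let $d^* = \argmax_{d\in\K} R(d)$ and $h_t = R(d^*) - R(d_t)$. Using $\nabla^2 R \succeq -\beta\bI$, a second-order Taylor expansion along $d_{t+1} = d_t + \eta(v_t-d_t)$ gives $R(d_{t+1}) \ge R(d_t) + \eta\langle g_t, v_t - d_t\rangle - \tfrac{\beta\eta^2}{2}\|v_t - d_t\|_2^2$. I would lower bound $\langle g_t, v_t - d_t\rangle$ by chaining four facts: replace $g_t$ by $r_t$ (paying $\le 2\beta\varepsilon_0$ via H\"older, since $\|v_t - d_t\|_1\le 2$), apply the planning guarantee $\langle r_t, v_t\rangle \ge \langle r_t, d^*\rangle - \varepsilon_1$, replace $r_t$ back by $g_t$ (another $\le 2\beta\varepsilon_0$), and invoke concavity $\langle g_t, d^* - d_t\rangle \ge h_t$, obtaining $\langle g_t, v_t - d_t\rangle \ge h_t - \varepsilon_1 - 4\beta\varepsilon_0$. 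Together with $\|v_t - d_t\|_2^2 \le \|v_t - d_t\|_1\,\|v_t-d_t\|_\infty \le 2$ — the only place the geometry of $\K$ enters, and the reason the final bound is dimension-free — this yields the recursion $h_{t+1} \le (1-\eta)h_t + \eta(\varepsilon_1 + 4\beta\varepsilon_0) + \beta\eta^2$.

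I would then unroll the recursion, bounding $\sum_{k<T}(1-\eta)^k \le 1/\eta$ to get $h_T \le (1-\eta)^T h_0 + \varepsilon_1 + 4\beta\varepsilon_0 + \beta\eta$, bound the initial gap by $h_0 \le 2B$ (from $\|\nabla R\|_\infty \le B$ and the $\ell_1$-diameter $2$ of the simplex), and substitute the stated parameters. The three additive terms contribute $0.1\varepsilon + 0.4\varepsilon + 0.1\varepsilon = 0.6\varepsilon$, while $\eta T \ge \log(10B\varepsilon^{-1})$ forces $(1-\eta)^T h_0 \le e^{-\eta T}\cdot 2B \le 0.2\varepsilon$, for a total below $\varepsilon$. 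The main obstacle I anticipate is the bookkeeping that keeps everything independent of $|\mathcal S|$ and $|\mathcal A|$: the descent quadratic is naturally measured in $\ell_2$ whereas the density-estimation error is controlled in $\ell_\infty$, and both must be tied back to the fixed $\ell_1$-diameter of the probability simplex; getting these error constants to land inside the target $\varepsilon$ under the prescribed $\varepsilon_0,\varepsilon_1,\eta$ is the delicate part, but it is exactly why both halves of assumption~\eqref{asst} are needed.
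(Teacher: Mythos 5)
Your proposal is correct and follows essentially the same route as the paper's own proof: the same inexact Frank--Wolfe recursion using the mixture identity \eqref{d_of_mix}, smoothness, the two oracle error bounds, concavity, and geometric unrolling. Your constants are slightly looser (you pay $4\beta\varepsilon_0$ via H\"older on $\|v_t-d_t\|_1\le 2$ where the paper pays $2\beta\varepsilon_0$ by bounding each inner product separately, and you bound the initial gap by $2B$ rather than $B$), but as you verified, the total $0.8\varepsilon$ still lands within the target under the prescribed parameters.
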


Before we begin the proof, we state the implication for maximizing the entropy of the induced distribution. While the entropy objective is, strictly speaking, not smooth, one may consider a smoothed alternative $H_\sigma$ defined below. 
\[ H_\sigma(d_\pi) = -\mathbb{E}_{s\sim d_\pi} \log (d_\pi(s)+\sigma)\]
When the algorithm is fed $H_\sigma$ as the \emph{proxy} reward functional, it is possible make sub-optimality guarantees on the true objective $H$. Lemma~\ref{lemBREG} (D) relates the entropy functional $H$ to its smoothed variant $H_\sigma$, while the rest of the lemma quantifies smoothness of $H_\sigma$. The factors of $|\mathcal{S}|$ incurred below are a consequence of imposed smoothing on $H$, and are not necessary for naturally smooth objectives.

\begin{corollary} \label{mainT22} For any $\varepsilon>0$, set
  $\sigma=\frac{0.1\varepsilon}{2|\mathcal{S}|}$,
  $\varepsilon_1=0.1\varepsilon$,
  $\varepsilon_0 = \frac{0.1 \varepsilon^2}{80 |\mathcal{S}|}$, and
  $\eta = \frac{0.1 \varepsilon^2}{40 |\mathcal{S}|}$.  When
  Algorithm~\ref{mainA} is run for $T$ iterations with the reward functional $H_\sigma$, where:
  \[
T \geq \frac{40 |\mathcal{S}|}{0.1\varepsilon^2} \log \frac{\log |\mathcal{S}|}{0.1\varepsilon} \, ,
\]
we have that:
\[
H(\pi_{\textrm{mix},T}) \geq \max_\pi H(d_\pi) - \varepsilon \, .
\] 
\end{corollary}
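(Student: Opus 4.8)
The plan is to treat Corollary~\ref{mainT22} as a reduction to the Main Theorem (Theorem~\ref{mainT}) applied to the smoothed objective $H_\sigma$, paying for the smoothing by a controlled additive gap. First I would record the three analytic facts about $H_\sigma$ that the reduction needs, all of which are the content of Lemma~\ref{lemBREG}: (i) $H_\sigma$ is $\beta$-smooth with $\beta = O(1/\sigma)$, since its Hessian is diagonal with entries $-\tfrac{1}{d(s)+\sigma}-\tfrac{\sigma}{(d(s)+\sigma)^2}$, each bounded in magnitude by $2/\sigma$, which also yields the coordinatewise gradient-Lipschitz bound $\|\nabla H_\sigma(X)-\nabla H_\sigma(Y)\|_\infty \le \tfrac{2}{\sigma}\|X-Y\|_\infty$ required by \eqref{asst}; (ii) $H_\sigma$ is $B$-bounded with $B = O(\log(1/\sigma))$, since each gradient coordinate equals $-\log(d(s)+\sigma)-\tfrac{d(s)}{d(s)+\sigma}$ and $d(s)+\sigma\in[\sigma,1+\sigma]$; and (iii) the approximation bound $0 \le H(d)-H_\sigma(d) \le \sigma|\mathcal{S}|$, which follows from $H(d)-H_\sigma(d)=\sum_s d(s)\log(1+\sigma/d(s))$ together with $0\le \log(1+x)\le x$.

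Second, I would run Algorithm~\ref{mainA} on $R = H_\sigma$ with an accuracy $\varepsilon'$ to be fixed, so that Theorem~\ref{mainT} guarantees $H_\sigma(d_{\bar\pi}) \ge \max_\pi H_\sigma(d_\pi) - \varepsilon'$ for $\bar\pi = \pi_{\textrm{mix},T}$. Letting $\pi^\star \in \argmax_\pi H(d_\pi)$, the key chain of inequalities is
\[
H(d_{\bar\pi}) \ge H_\sigma(d_{\bar\pi}) \ge \max_\pi H_\sigma(d_\pi) - \varepsilon' \ge H_\sigma(d_{\pi^\star}) - \varepsilon' \ge H(d_{\pi^\star}) - \sigma|\mathcal{S}| - \varepsilon',
\]
where the first step uses $H \ge H_\sigma$ and the last uses the upper bound in (iii). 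The sign of the smoothing works in our favour on both ends: we lower bound $H$ by $H_\sigma$ at the output and bound the smoothing loss at the optimum. It then suffices to choose $\sigma$ and $\varepsilon'$ with $\sigma|\mathcal{S}| + \varepsilon' \le \varepsilon$; the setting $\sigma = \tfrac{0.1\varepsilon}{2|\mathcal{S}|}$ makes $\sigma|\mathcal{S}| = 0.05\varepsilon$ small and leaves $\varepsilon'$ of order $\varepsilon$.

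Third, I would substitute the constants from (i) and (ii) into the iteration bound $T \ge 10\beta\varepsilon'^{-1}\log(10B\varepsilon'^{-1})$ of Theorem~\ref{mainT}. With $\beta = 2/\sigma = O(|\mathcal{S}|/\varepsilon)$ the prefactor becomes $\Theta(|\mathcal{S}|/\varepsilon^2)$, matching the stated $\tfrac{40|\mathcal{S}|}{0.1\varepsilon^2}$, while $B = O(\log(1/\sigma)) = O(\log(|\mathcal{S}|/\varepsilon))$ turns the logarithmic factor into $\log\tfrac{\log|\mathcal{S}|}{0.1\varepsilon}$; the oracle tolerances $\varepsilon_0,\varepsilon_1$ and step size $\eta$ are read off from the Theorem's prescriptions $\varepsilon_1 = 0.1\varepsilon'$ and $\varepsilon_0,\eta = \Theta(\beta^{-1}\varepsilon')$ with this same $\beta$.

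I expect the main obstacle to be establishing Lemma~\ref{lemBREG} cleanly rather than the reduction itself: the entropy is genuinely non-smooth as $d(s)\to 0$, so one must verify that the additive $\sigma$ inside the logarithm caps the Hessian at $O(1/\sigma)$ \emph{uniformly} over the simplex while the gradient stays only logarithmically large, and then confirm that these two scalings compose to give exactly the $|\mathcal{S}|$ factor (from $\beta \sim 1/\sigma \sim |\mathcal{S}|/\varepsilon$) and the $\log\log|\mathcal{S}|$ dependence claimed. Tracking the precise constants so that $\sigma|\mathcal{S}|+\varepsilon' \le \varepsilon$ holds simultaneously with the parameter identities of Theorem~\ref{mainT} is the only delicate bookkeeping; everything else is the monotone sandwiching displayed above.
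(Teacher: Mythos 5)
Your proposal is correct and follows exactly the route the paper intends (and leaves largely implicit): run Algorithm~\ref{mainA} on the smoothed functional $H_\sigma$, invoke Theorem~\ref{mainT} with the constants $\beta = 2\sigma^{-1}$ and $B = O(\log(1/\sigma))$ supplied by Lemma~\ref{lemBREG}, and transfer the guarantee back to $H$ using part (D) of that lemma, with the parameter choices $\sigma = \frac{0.1\varepsilon}{2|\mathcal{S}|}$, $\eta = 0.1\beta^{-1}\varepsilon$, $\varepsilon_0 = \Theta(\beta^{-1}\varepsilon)$ reproducing the stated $|\mathcal{S}|/\varepsilon^2$ iteration count. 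Your one-sided sandwich $H \geq H_\sigma \geq H - \sigma|\mathcal{S}|$ is in fact a slight sharpening of the paper's two-sided bound $|H_\sigma(P) - H(P)| \leq |\mathcal{S}|\sigma$, but the decomposition, the key lemma, and the bookkeeping are otherwise the same.
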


We continue with the proof of the main theorem. 

\begin{proof}[Proof of Theorem~\ref{mainT}]
Let $\pi^*$ be a maximum-entropy policy, ie. $\pi^* \in \argmax_\pi
R(d_\pi)$.
\ifdefined\arxiv
\begin{align*}
&R(d_{\pi_{\textrm{mix},t+1}}) = R((1-\eta)d_{\pi_{\textrm{mix},t}}+\eta d_{\pi_{t+1}})& \text{Equation}~\ref{d_of_mix}\\
&\geq R(d_{\pi_{\textrm{mix},t}})+\eta \langle d_{\pi_{t+1}}-d_{\pi_{\textrm{mix},t}}, \nabla R(d_{\pi_{\textrm{mix},t}})\rangle - \eta^2\beta \|d_{\pi_{t+1}}-d_{\pi_{\textrm{mix},t}}\|_2^2 & \text{smoothness}
\end{align*}
\else
\begin{align*}
&R(d_{\pi_{\textrm{mix},t+1}}) = R((1-\eta)d_{\pi_{\textrm{mix},t}}+\eta d_{\pi_{t+1}})& \text{Equation}~\ref{d_of_mix}\\
\geq& R(d_{\pi_{\textrm{mix},t}})+\eta \langle d_{\pi_{t+1}}-d_{\pi_{\textrm{mix},t}}, \nabla R(d_{\pi_{\textrm{mix},t}})\rangle \\
&- \eta^2\beta \|d_{\pi_{t+1}}-d_{\pi_{\textrm{mix},t}}\|_2^2 & \text{smoothness}
\end{align*}
\fi
The second inequality follows from the smoothness of $R$. (See Section 2.1 in \cite{bubeck2015convex} for equivalent definitions of smoothness in terms of the function value and the Hessian.)

To incorporate the error due to the two oracles, observe
\ifdefined\arxiv
\begin{align*}
    \langle d_{\pi_{t+1}}, \nabla R(d_{\pi_{\textrm{mix},t}})\rangle &\geq \langle d_{\pi_{t+1}}, \nabla R(\hat{d}_{\pi_{\textrm{mix},t}})\rangle - \beta \|d_{\pi_{\textrm{mix},t}}-\hat{d}_{\pi_{\textrm{mix},t}}\|_\infty \\
    &\geq \langle d_{\pi^*}, \nabla R(\hat{d}_{\pi_{\textrm{mix},t}})\rangle - \beta\varepsilon_0 -\varepsilon_1 \\    &
    \geq \langle d_{\pi^*}, \nabla R(d_{\pi_{\textrm{mix},t}})\rangle - 2\beta\varepsilon_0 -\varepsilon_1 
\end{align*}
\else
\begin{align*}
    &\langle d_{\pi_{t+1}}, \nabla R(d_{\pi_{\textrm{mix},t}})\rangle \\
    &\geq \langle d_{\pi_{t+1}}, \nabla R(\hat{d}_{\pi_{\textrm{mix},t}})\rangle - \beta \|d_{\pi_{\textrm{mix},t}}-\hat{d}_{\pi_{\textrm{mix},t}}\|_\infty \\
    &\geq \langle d_{\pi^*}, \nabla R(\hat{d}_{\pi_{\textrm{mix},t}})\rangle - \beta\varepsilon_0 -\varepsilon_1 \\    &
    \geq \langle d_{\pi^*}, \nabla R(d_{\pi_{\textrm{mix},t}})\rangle - 2\beta\varepsilon_0 -\varepsilon_1 
\end{align*}
\fi
The first and last inequalities invoke the assumptions laid out in Equation~\ref{asst}. Note that the second inequality above follows from the defining character of the planning oracle, ie. with respect to the reward vector $r_t=\nabla R(\hat{d}_{\pi_{\textrm{mix},t}})$, for any policy $\pi'$, it holds true that 
\[ V_{\pi_{t+1}} = \langle d_{\pi_{t+1}}, r_t\rangle \geq V_{\pi'} -\varepsilon_1 = \langle d_{\pi'}, r_t\rangle -\varepsilon_1 \]
In particular, this statement holds for the choice $\pi'=\pi^*$. This argument does not rely on $\pi^*$ being a stationary policy, since $\pi_{t+1}$ is an optimal policy for the reward function $r_{t}$ among the class of all policies.

Using the above fact and continuing on
\ifdefined\arxiv
\begin{align*}
    R(d_{\pi_{\textrm{mix},t+1}}) &\geq R(d_{\pi_{\textrm{mix},t}})+\eta \langle d_{\pi^*}-d_{\pi_{\textrm{mix},t}}, \nabla R(d_{\pi_{\textrm{mix},t}})\rangle - 2\eta\beta\varepsilon_0 -\eta\varepsilon_1- \eta^2\beta\\
    &\geq (1-\eta)R(d_{\pi_{\textrm{mix},t}})+\eta R(d_{\pi^*})- 2\eta\beta\varepsilon_0 -\eta\varepsilon_1- \eta^2\beta
\end{align*}
\else
\begin{align*}
    &R(d_{\pi_{\textrm{mix},t+1}}) \\
    \geq& R(d_{\pi_{\textrm{mix},t}})+\eta \langle d_{\pi^*}-d_{\pi_{\textrm{mix},t}}, \nabla R(d_{\pi_{\textrm{mix},t}})\rangle \\
    &- 2\eta\beta\varepsilon_0 -\eta\varepsilon_1- \eta^2\beta\\
    \geq& (1-\eta)R(d_{\pi_{\textrm{mix},t}})+\eta R(d_{\pi^*}) \\
    &- 2\eta\beta\varepsilon_0 -\eta\varepsilon_1- \eta^2\beta
\end{align*}
\fi
The last step here utilizes the concavity of $R$. Indeed, the inequality follows immediately from the sub-gradient characterization of concave functions. Now, with the aid of the above, we observe the following inequality.
\ifdefined\arxiv
\begin{align*}
& R(d_{\pi^*})-R(d_{\pi_{\textrm{mix},t+1}}) \leq (1-\eta) (R(d_{\pi^*})-R(d_{\pi_{\textrm{mix},t}})) + 2\eta\beta\varepsilon_0 +\eta\varepsilon_1+ \eta^2\beta .
\end{align*}
\else
\begin{align*}
& R(d_{\pi^*})-R(d_{\pi_{\textrm{mix},t+1}}) \\
&\leq (1-\eta) (R(d_{\pi^*})-R(d_{\pi_{\textrm{mix},t}})) + 2\eta\beta\varepsilon_0 +\eta\varepsilon_1+ \eta^2\beta .
\end{align*}
\fi
Telescoping the inequality, this simplifies to 
\ifdefined\arxiv
\begin{align*}
R(d_{\pi^*})-R(d_{\pi_{\textrm{mix},T}}) & \leq (1-\eta)^T (R(d_{\pi^*})-R(d_{\pi_{\textrm{mix},0}})) + 2\beta\varepsilon_0 +\varepsilon_1+ \eta\beta \\
&\leq B e^{-T\eta} + 2\beta\varepsilon_0 +\varepsilon_1+ \eta\beta .
\end{align*}
\else
\begin{align*}
&R(d_{\pi^*})-R(d_{\pi_{\textrm{mix},T}}) \\
& \leq (1-\eta)^T (R(d_{\pi^*})-R(d_{\pi_{\textrm{mix},0}})) + 2\beta\varepsilon_0 +\varepsilon_1+ \eta\beta \\
&\leq B e^{-T\eta} + 2\beta\varepsilon_0 +\varepsilon_1+ \eta\beta .
\end{align*}
\fi
Setting $\varepsilon_1=0.1\varepsilon$, $\varepsilon_0 = 0.1 \beta^{-1}\varepsilon$, $\eta = 0.1\beta^{-1}\varepsilon$, $T= \eta^{-1} \log 10B\varepsilon^{-1}$ suffices.
\end{proof}

The following lemma is helpful in proving the corollary for the entropy functional. 
\begin{lemma}\label{lemBREG}\label{lemGRAD}
For any two distributions $P,Q\in\Delta_d$:
\begin{enumerate}[(A)]
    \item $(\nabla H_\sigma(P))_i = -\left(\log (P_{i}+\sigma)+\frac{P_{i}}{P_{i}+\sigma}\right)$,
    \item $H_\sigma(P)$ in concave in $P$,
    \item $H_\sigma(P)$ is $2\sigma^{-1}$ smooth, ie. \[-2\sigma^{-1} \mathbb{I}_d \preceq \nabla^2 R(P) \preceq 2\sigma^{-1} \mathbb{I}_d,\]
    \item $|H_\sigma(P)-R(P)| \leq d\sigma$,
    \item $\|\nabla H_\sigma(P)-\nabla H_\sigma(Q)\|_\infty \leq 2\sigma^{-1}\|P-Q\|_\infty$.
\end{enumerate}
\end{lemma}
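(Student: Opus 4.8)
The plan is to exploit that $H_\sigma$ is \emph{separable}: writing $H_\sigma(P)=\sum_i \phi(P_i)$ for the scalar function $\phi(x)=-x\log(x+\sigma)$, every claim reduces to a one-dimensional calculation on $\phi$, and in particular the Hessian $\nabla^2 H_\sigma(P)$ is \emph{diagonal} with $i$-th entry $\phi''(P_i)$. For (A) I would simply differentiate, obtaining $\phi'(x)=-\log(x+\sigma)-\tfrac{x}{x+\sigma}$, which is precisely the claimed coordinate of the gradient upon setting $x=P_i$.

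For (B) and (C) together, I would compute
\[
\phi''(x) = -\frac{1}{x+\sigma}-\frac{\sigma}{(x+\sigma)^2} = -\frac{x+2\sigma}{(x+\sigma)^2}.
\]
Concavity (B) is then immediate, since $\phi''(x)<0$ for all $x\ge 0$ forces the diagonal Hessian to be negative definite. For the quantitative bound (C), I substitute $y=x+\sigma\ge\sigma$ to get $-\phi''(x)=\tfrac1y+\tfrac{\sigma}{y^2}$; because $y\ge\sigma$, each of the two terms is at most $\sigma^{-1}$, so $0<-\phi''(x)\le 2\sigma^{-1}$ uniformly in $x\ge 0$. As the Hessian is diagonal with entries in $[-2\sigma^{-1},0]$, this yields exactly $-2\sigma^{-1}\mathbb{I}\preceq\nabla^2 H_\sigma(P)\preceq 2\sigma^{-1}\mathbb{I}$.

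For (D) I would compare against the true entropy $H(P)=-\sum_i P_i\log P_i$ term by term,
\[
H(P)-H_\sigma(P) = \sum_i P_i\log\Big(1+\tfrac{\sigma}{P_i}\Big),
\]
and apply the elementary inequality $0\le\log(1+t)\le t$ (with the convention that $P_i=0$ terms vanish), so each summand lies in $[0,\sigma]$ and hence $0\le H(P)-H_\sigma(P)\le d\sigma$. Finally, (E) follows from (C) with no further work: the coordinate map $\phi'$ is $2\sigma^{-1}$-Lipschitz because $|\phi''|\le 2\sigma^{-1}$, so $|(\nabla H_\sigma(P))_i-(\nabla H_\sigma(Q))_i|=|\phi'(P_i)-\phi'(Q_i)|\le 2\sigma^{-1}|P_i-Q_i|\le 2\sigma^{-1}\|P-Q\|_\infty$, and taking the maximum over $i$ gives the stated $\infty$-norm bound.

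The only step that is not pure mechanical differentiation is the uniform Hessian bound in (C), i.e.\ verifying $\sup_{x\ge 0}\tfrac{x+2\sigma}{(x+\sigma)^2}=2\sigma^{-1}$; the substitution $y=x+\sigma$ reduces this to the monotone decay of $\tfrac1y+\tfrac{\sigma}{y^2}$ on $[\sigma,\infty)$, so I expect no genuine obstacle. The main thing to be careful about is the boundary behaviour at $P_i=0$ in (A), (D) and (E), where the $0\log 0$ and $0\log\sigma$ conventions must be applied consistently; once separability is invoked, these limits are the only delicate points.
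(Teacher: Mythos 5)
Your proposal is correct and follows essentially the same route as the paper's proof: both compute the diagonal Hessian with entries $-\frac{P_i+2\sigma}{(P_i+\sigma)^2}$, bound these by $2\sigma^{-1}$ to get (B), (C), and (E), and use $\log(1+t)\le t$ termwise for (D). Your coordinate-wise mean-value argument for (E) is just a separable restatement of the paper's Taylor-theorem step, so there is no substantive difference.
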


\begin{proof}[Proof of Lemma~\ref{lemGRAD}]
(A) may be verified by explicit calculation. Observe $\nabla^2 H_\sigma(P)$ is a diagonal matrix with entries
\[ (\nabla^2 H_\sigma(P))_{i,i} = -\frac{P_i+2\sigma}{(P_i+\sigma)^2}.\]
(B) is immediate. (C) follows as $|(\nabla^2 H_\sigma(P))_{i,i}|\leq 2\sigma^{-1}$. 
\[ |H_\sigma(P)-H(P)| = \sum_{i=0}^{d-1} P_i \log \frac{P_i+\sigma}{P_i} \leq \sum_{i=0}^{d-1} P_i \frac{\sigma}{P_i} =d\sigma. \]
The last inequality follows from $\log x\leq x-1,\forall x>0$. Finally, to see (E), using Taylor's theorem, observe 
\ifdefined\arxiv
\begin{align*}
    \|\nabla H_\sigma(P)-\nabla H_\sigma(Q)\|_\infty &\leq \max_{i,\alpha\in [0,1]} |(\nabla^2 H_\sigma(\alpha P+(1-\alpha) Q)_{i,i}| \|P-Q\|_\infty \\
    &\leq 2\sigma^{-1}\|P-Q\|_\infty.
\end{align*} 
\else
\begin{align*}
    &\|\nabla H_\sigma(P)-\nabla H_\sigma(Q)\|_\infty \\
    &\leq \max_{i,\alpha\in [0,1]} |(\nabla^2 H_\sigma(\alpha P+(1-\alpha) Q)_{i,i}| \|P-Q\|_\infty \\
    &\leq 2\sigma^{-1}\|P-Q\|_\infty.
\end{align*} 
\fi
\end{proof}

\subsection{Tabular setting}
In general, the construction of provably computationally efficient approximate planning oracle for MDPs with large or continuous state spaces poses a challenge. Discounting limited settings (eg. the Linear Quadratic Regulators \cite{bertsekas2005dynamic}, \cite{fazel2018global}), one may only appeal to the recent empirical successes of sample-based planning algorithms that rely on the power of non-linear function approximation.

Nevertheless, one may expect, and possibly require, that any solution proposed to address the general case performs reasonably when restricted to the tabular setting. In this spirit, we outline the construction of the required oracles in the tabular setting. 

\subsubsection{The known MDP case}\label{know}
With the knowledge of the transition matrix $P$ of a MDP $\mathcal{M}$ in the form of an explicit tensor, the planning oracle can be implemented via any of the exact solution methods \cite{bertsekas2005dynamic}, eg. value iteration, linear programming. The state distribution oracle can be efficiently implemented as Lemma~\ref{lemDPI} suggests.
\begin{corollary}
When the MDP $\mathcal{M}$ is known explicitly, with the oracles described in Section~\ref{sec:alg-main}, Algorithm~\ref{mainA} runs in $\poly \left(\beta, |\mathcal{S}|,|\mathcal{A}|,\frac{1}{1-\gamma}, \frac{1}{\varepsilon}, \log B\right)$ time to guarantee $R(d_{\pi_{\textrm{mix},T}}) \geq \max_\pi R(d_\pi) -\varepsilon$.
\end{corollary}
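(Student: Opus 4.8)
The plan is to separate the claim into correctness and running time, settling the former by direct appeal to the Main Theorem and devoting the real work to a polynomial accounting of the per-iteration cost of each oracle in the explicitly-known tabular setting. For correctness, I would invoke Theorem~\ref{mainT} verbatim: with the prescribed settings $\varepsilon_1 = 0.1\varepsilon$, $\varepsilon_0 = 0.1\beta^{-1}\varepsilon$, $\eta = 0.1\beta^{-1}\varepsilon$, running Algorithm~\ref{mainA} for $T = \lceil 10\beta\varepsilon^{-1}\log(10 B \varepsilon^{-1})\rceil$ iterations already guarantees $R(d_{\pi_{\textrm{mix},T}}) \ge \max_\pi R(d_\pi) - \varepsilon$. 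The point to record is that this $T$ is itself $O(\beta\,\varepsilon^{-1}\log(B\varepsilon^{-1}))$, hence polynomial in the advertised parameters, so it remains only to show each of the $T$ iterations runs in $\poly(\beta,|\mathcal{S}|,|\mathcal{A}|,(1-\gamma)^{-1},\varepsilon^{-1},\log B)$ time, the iteration being dominated by one call apiece to \textsc{DensityEst} and \textsc{ApproxPlan}.

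\textbf{Implementing \textsc{DensityEst}.} Given the current mixture $\pi_{\textrm{mix},t}=(\alpha_t,C_t)$, Equation~\ref{d_of_mix} gives $d_{\pi_{\textrm{mix},t}} = \sum_i \alpha_{t,i}\, d_{\pi_i}$, so by the incremental weight update $\alpha_{t+1}=((1-\eta)\alpha_t,\eta)$ it suffices to compute $d_{\pi_{t+1}}$ once per iteration and maintain the mixture density by the convex combination $\hat d_{\pi_{\textrm{mix},t+1}} = (1-\eta)\hat d_{\pi_{\textrm{mix},t}} + \eta\, d_{\pi_{t+1}}$. By Lemma~\ref{lemDPI}(3), $d_{\pi_{t+1}} = (1-\gamma)(I-\gamma P_{\pi_{t+1}})^{-1} d_0$; assembling the stochastic matrix $P_{\pi_{t+1}}$ from the explicit transition tensor costs $O(|\mathcal{S}|^2|\mathcal{A}|)$ and the inverse costs $O(|\mathcal{S}|^3)$. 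Since the MDP is known exactly, this determines $d_{\pi_{t+1}}$ to any desired precision — in particular to $\ell_\infty$ error $\varepsilon_0$ — using only $\poly(\log\varepsilon_0^{-1})$ bits, which is polynomial in the target parameters. Hence \textsc{DensityEst} meets its $\varepsilon_0$ guarantee at per-call cost $\poly(|\mathcal{S}|,|\mathcal{A}|)$.

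\textbf{Implementing \textsc{ApproxPlan}.} The reward fed to the oracle is $r_t = \nabla R(\hat d_{\pi_{\textrm{mix},t}})$, which by the $B$-boundedness in Equation~\ref{asst} satisfies $\|r_t\|_\infty \le B$; under the $(1-\gamma)$-normalized value all policy values then lie in $[-B,B]$. I would implement the oracle by value iteration on the known tensor: the Bellman operator is a $\gamma$-contraction in $\ell_\infty$, so $O\big((1-\gamma)^{-1}\log\tfrac{B}{(1-\gamma)\varepsilon_1}\big)$ sweeps, each of cost $O(|\mathcal{S}|^2|\mathcal{A}|)$, produce a greedy policy within $\varepsilon_1$ of optimal (the value-to-policy conversion loses only a further $O((1-\gamma)^{-1})$ factor, absorbed into the polynomial). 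Alternatively, linear programming solves the discounted planning problem exactly in $\poly(|\mathcal{S}|,|\mathcal{A}|,(1-\gamma)^{-1})$ time. Either route meets the $\varepsilon_1$ guarantee in $\poly(|\mathcal{S}|,|\mathcal{A}|,(1-\gamma)^{-1},\log(B\varepsilon_1^{-1}))$ time.

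Multiplying the per-iteration cost — one \textsc{DensityEst} call, one \textsc{ApproxPlan} call, plus $O(|\mathcal{S}|)$ bookkeeping for the mixture update and the gradient evaluation $\nabla R(\hat d_{\pi_{\textrm{mix},t}})$ — by the iteration count $T$ yields the claimed total runtime $\poly(\beta,|\mathcal{S}|,|\mathcal{A}|,(1-\gamma)^{-1},\varepsilon^{-1},\log B)$. The one genuinely delicate point, where I would spend the most care, is confirming that the two oracles actually attain the required accuracies $\varepsilon_0,\varepsilon_1$ — which are themselves only inverse-polynomially small — within polynomial work; this reduces to checking that the matrix inversion needs only $\poly(\log\varepsilon_0^{-1})$ bits and that the geometric convergence of value iteration needs only $\poly((1-\gamma)^{-1},\log\varepsilon_1^{-1})$ sweeps, everything else being routine linear-algebra bookkeeping.
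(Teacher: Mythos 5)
Your proposal is correct and follows essentially the same route as the paper, which proves this corollary only by a brief remark: implement \textsc{ApproxPlan} by exact solution methods (value iteration or linear programming) on the known transition tensor, implement \textsc{DensityEst} via the closed form $d_\pi = (1-\gamma)(I-\gamma P_\pi)^{-1}d_0$ of Lemma~\ref{lemDPI}, and combine with Theorem~\ref{mainT}. Your write-up simply supplies the per-iteration cost accounting and precision bookkeeping that the paper leaves implicit.
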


\subsubsection{The unknown MDP case}\label{not-know}
For the case of an unknown MDP, a sample-based algorithm must iteratively try to learn about the MDP through its interactions with the environment. Here, we assume a $\gamma$-discounted episodic setting, where the agent can act in the environment starting from $s_0 \sim d_0$ for some number of steps, and is then able to reset. Our measure of sample complexity in this setting is the number of $\tilde{O}\left((1-\gamma)^{-1}\right)$-length episodes the agent must sample to achieve a $\varepsilon$-suboptimal performance guarantee. 

The algorithm outlined below makes a distinction between the set of states it is (relatively) sure about and the set of states that have not been visited enough number of times yet. The algorithm and the analysis is similar to the $E^3$ algorithm \cite{kearns2002near}. Since algorithms like $E^3$ proceed by building a relatively accurate model on the set of \emph{reachable} states, as opposed to estimate of the value functions, this permits the reuse of information across different invocations, each of which might operate on a different reward signal.

\begin{theorem}\label{mainB}
For an unknown MDP, with Algorithm~\ref{orcA} as the planning oracle and Algorithm~\ref{orcB} as the distribution estimate oracle, Algorithm~\ref{mainA} runs in $\poly \left(\beta, |\mathcal{S}|,|\mathcal{A}|,\frac{1}{1-\gamma}, \frac{1}{\varepsilon}\right)$ time and executes $\tilde{O}\left(\frac{B^3|\mathcal{S}|^2 |\mathcal{A}|}{\varepsilon^3(1-\gamma)^2}+\frac{\beta^3}{\varepsilon^3}\right)$ episodes of length $\tilde{O}\left(\frac{\log |\mathcal{S}|\varepsilon^{-1}}{\log \gamma^{-1}}\right)$ to guarantee that \[R(d_{\pi_{\textrm{mix},T}}) \geq \max_\pi R(d_\pi) -\varepsilon.\]
\end{theorem}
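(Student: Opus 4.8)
The plan is to \emph{reduce to the already-proved Theorem~\ref{mainT}}. That theorem shows that $T = \tilde{O}(\beta\varepsilon^{-1})$ calls to a planning oracle of accuracy $\varepsilon_1 = 0.1\varepsilon$ and a density-estimation oracle of accuracy $\varepsilon_0 = 0.1\beta^{-1}\varepsilon$ (both measured as in Equation~\ref{asst} and $\norm{d_\pi - \hat d_\pi}_\infty \le \varepsilon_0$) already guarantee $R(d_{\pi_{\textrm{mix},T}}) \ge \max_\pi R(d_\pi) - \varepsilon$. So the entire task is to exhibit sample-based implementations of these two oracles meeting those tolerances with high probability, and then to multiply the per-call sample cost by $T$, taking a union bound over all calls so that every oracle answer is simultaneously accurate (the resulting logarithmic factors are absorbed into $\tilde O$).

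For the density-estimation oracle, I would first truncate the geometric horizon: since $d_\pi = (1-\gamma)\sum_t \gamma^t d_{t,\pi}$, the tail beyond $H = \tilde O\!\big(\frac{\log(|\mathcal S|/\varepsilon)}{\log\gamma^{-1}}\big)$ contributes at most $\tfrac12\varepsilon_0$ in $\ell_\infty$; this fixes the episode length quoted in the theorem. An unbiased sample from the truncation of $d_\pi$ is obtained by rolling $\pi_{\textrm{mix},t}$ out and reading off the state at a geometrically sampled time, and one averages the resulting empirical state-frequency vector over $N$ independent episodes. By Hoeffding applied coordinatewise together with a union bound over the $|\mathcal S|$ coordinates, $N = \tilde O(\varepsilon_0^{-2}) = \tilde O(\beta^2\varepsilon^{-2})$ episodes drive $\norm{d_{\pi_{\textrm{mix},t}} - \hat d_{\pi_{\textrm{mix},t}}}_\infty \le \varepsilon_0$. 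Summed over the $T = \tilde O(\beta\varepsilon^{-1})$ iterations this contributes the $\tilde O(\beta^3\varepsilon^{-3})$ term.

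The planning oracle is the crux, and here I would follow the $E^3$ template of \cite{kearns2002near}. Maintain the set of state-action pairs sampled at least $m$ times (the \emph{known} pairs), form the empirical transition model $\hat P$ on them, and plan exactly (e.g.\ by value iteration) in the induced empirical MDP in which unknown pairs are made absorbing with a bonus reward. The standard simulation-lemma estimate,
\[
\big| V^{\mathcal M}_\pi - V^{\hat{\mathcal M}}_\pi\big| \;\lesssim\; \frac{B}{1-\gamma}\,\max_{s,a}\norm{P(\cdot|s,a)-\hat P(\cdot|s,a)}_1 ,
\]
shows that an $\ell_1$ model error of $\delta \lesssim \varepsilon(1-\gamma)/B$ on the known pairs makes the planned policy $\varepsilon_1$-optimal \emph{whenever} it rarely escapes the known set; estimating each $P(\cdot|s,a)$ to $\ell_1$ error $\delta$ over support $|\mathcal S|$ requires $m = \tilde O(|\mathcal S|\delta^{-2})$ visits. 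The explore-or-exploit dichotomy then states that either the planned policy escapes to an unknown pair with probability $\gtrsim \varepsilon/B$ — an event that, being detectable and costing $\tilde O(B/\varepsilon)$ episodes to realize, can occur at most $|\mathcal S||\mathcal A|\cdot m$ times before every reachable pair is known — or it is already $\varepsilon_1$-optimal. Crucially, because $E^3$ learns a \emph{model} rather than a value function, the known set and $\hat P$ persist across all $T$ reward vectors $r_t$, so exploration is paid for only once; multiplying $m$ by $|\mathcal S||\mathcal A|$ and by the per-visit episode cost yields the $\tilde O\!\big(\frac{B^3|\mathcal S|^2|\mathcal A|}{\varepsilon^3(1-\gamma)^2}\big)$ term.

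The main obstacle I expect is this planning-oracle analysis: making the explore-or-exploit argument quantitatively tight enough to produce exactly the stated powers of $B$, $\varepsilon$, and $(1-\gamma)^{-1}$, and carefully arguing that a single evolving model is simultaneously accurate for every one of the $T$ distinct reward functions $r_t$, so that the $E^3$ exploration cost is incurred once rather than $T$ times. The density-oracle and union-bound bookkeeping are routine by comparison, and the $\poly$ running-time claim follows since each planning and estimation step is polynomial in the stated parameters.
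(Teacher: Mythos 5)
Your proposal is correct and takes essentially the same route as the paper: the paper proves Theorem~\ref{mainB} by plugging the parameter settings of Theorem~\ref{mainT} into Lemma~\ref{lorcA} (an $E^3$-style, model-based planning oracle whose known-state set and empirical transition counts persist across all $T$ reward functions, so the exploration cost $\tilde{O}\left(\frac{B^3|\mathcal{S}|^2|\mathcal{A}|}{\varepsilon^3(1-\gamma)^2}\right)$ is paid only once) and Lemma~\ref{lorcB} (truncated Monte-Carlo density estimation with Hoeffding plus union bounds over states and timesteps, costing $\tilde{O}(\varepsilon_0^{-2})=\tilde{O}(\beta^2\varepsilon^{-2})$ episodes per call, hence $\tilde{O}(\beta^3\varepsilon^{-3})$ in total). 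Your simulation-lemma accuracy requirement, explore-or-exploit dichotomy, and single-payment exploration accounting correspond directly to the paper's Lemmas~\ref{k1}--\ref{k3} and the proof of Lemma~\ref{lorcA}.
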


A sub-optimality bound may be derived on the non-smooth entropy functional $H$ via Lemma~\ref{lemGRAD}. Again, the extraneous factors introduced in the process are a consequence of the imposed smoothing via $H_\sigma$.

\begin{corollary}\label{mainBB}
For an unknown MDP, with Algorithm~\ref{orcA} as the planning oracle and Algorithm~\ref{orcB} as the distribution estimate oracle and $H_\sigma$ as the \emph{proxy} reward functional, Algorithm~\ref{mainA} runs in $\poly \left(|\mathcal{S}|,|\mathcal{A}|,\frac{1}{1-\gamma}, \frac{1}{\varepsilon}\right)$ time and executes $\tilde{O}\left(\frac{|\mathcal{S}|^2 |\mathcal{A}|}{\varepsilon^3(1-\gamma)^2}+\frac{|\mathcal{S}|^3}{\varepsilon^6}\right)$ episodes of length $\tilde{O}\left(\frac{\log |\mathcal{S}|\varepsilon^{-1}}{\log \gamma^{-1}}\right)$ to guarantee that \[H(d_{\pi_{\textrm{mix},T}}) \geq \max_\pi H(d_\pi) -\varepsilon.\]
\end{corollary}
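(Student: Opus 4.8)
The plan is to treat Corollary~\ref{mainBB} as a direct instantiation of Theorem~\ref{mainB} applied to the smoothed functional $H_\sigma$, combined with the approximation and smoothness estimates of Lemma~\ref{lemGRAD}. This mirrors exactly the way Corollary~\ref{mainT22} is obtained from Theorem~\ref{mainT}: all of the MDP-specific, sample-complexity content already lives in Theorem~\ref{mainB}, stated for a generic $\beta$-smooth, $B$-bounded functional, so the only work here is to substitute the correct values of $\beta$ and $B$ for the proxy objective and then undo the smoothing to recover a guarantee on the genuine entropy $H$.

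First I would fix $\sigma = \Theta(\varepsilon/|\mathcal{S}|)$, say $\sigma = \tfrac{0.1\varepsilon}{2|\mathcal{S}|}$ as in Corollary~\ref{mainT22}. Lemma~\ref{lemGRAD}(B,C) guarantees that $H_\sigma$ is concave and $2\sigma^{-1}$-smooth, so it is a legitimate input to Algorithm~\ref{mainA} with smoothness parameter $\beta = 2\sigma^{-1} = \Theta(|\mathcal{S}|/\varepsilon)$. From the gradient formula in Lemma~\ref{lemGRAD}(A) one reads off $\|\nabla H_\sigma(P)\|_\infty \leq \log\sigma^{-1} + 1 = O(\log(|\mathcal{S}|/\varepsilon))$, so the bound $B$ appearing in Equation~\ref{asst} is only logarithmic and will be absorbed into the $\tilde{O}(\cdot)$ notation.

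Next I would invoke Theorem~\ref{mainB} with reward functional $H_\sigma$ and target accuracy $\varepsilon/2$, which yields $H_\sigma(d_{\pi_{\textrm{mix},T}}) \geq \max_\pi H_\sigma(d_\pi) - \varepsilon/2$. To convert this into a guarantee on the true entropy, I would apply Lemma~\ref{lemGRAD}(D) on both sides: since $H_\sigma \leq H \leq H_\sigma + |\mathcal{S}|\sigma$, we have $H(d_{\pi_{\textrm{mix},T}}) \geq H_\sigma(d_{\pi_{\textrm{mix},T}})$ on the left, and for $\pi^* \in \argmax_\pi H(d_\pi)$ we have $\max_\pi H_\sigma(d_\pi) \geq H_\sigma(d_{\pi^*}) \geq H(d_{\pi^*}) - |\mathcal{S}|\sigma = \max_\pi H(d_\pi) - |\mathcal{S}|\sigma$ on the right. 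Chaining these gives $H(d_{\pi_{\textrm{mix},T}}) \geq \max_\pi H(d_\pi) - \varepsilon/2 - |\mathcal{S}|\sigma$, and the choice $|\mathcal{S}|\sigma \leq \varepsilon/2$ closes the $\varepsilon$ budget. Finally I would substitute $\beta = \Theta(|\mathcal{S}|/\varepsilon)$ and $B = \tilde{O}(1)$ into the episode count $\tilde{O}\!\left(\frac{B^3|\mathcal{S}|^2|\mathcal{A}|}{\varepsilon^3(1-\gamma)^2} + \frac{\beta^3}{\varepsilon^3}\right)$ of Theorem~\ref{mainB}: the first term becomes $\tilde{O}\!\left(\frac{|\mathcal{S}|^2|\mathcal{A}|}{\varepsilon^3(1-\gamma)^2}\right)$ and the second becomes $\frac{(|\mathcal{S}|/\varepsilon)^3}{\varepsilon^3} = \tilde{O}\!\left(\frac{|\mathcal{S}|^3}{\varepsilon^6}\right)$, matching the claim; the episode length is unchanged, and the runtime stays polynomial in $|\mathcal{S}|,|\mathcal{A}|,\frac{1}{1-\gamma},\frac{1}{\varepsilon}$ precisely because $\beta$ is now itself such a polynomial.

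The main thing to be careful about — rather than a genuine obstacle — is tracking the $\sigma$-dependence of $\beta$ through the cubic $\beta^3/\varepsilon^3$ term, since the smoothing forces $\beta$ to scale like $1/\varepsilon$ and this is exactly what degrades the $\varepsilon$-dependence from $\varepsilon^{-3}$ to $\varepsilon^{-6}$; this is the price of optimizing the non-smooth entropy through a smooth surrogate. One should also verify that replacing the target accuracy $\varepsilon$ by the constant fraction $\varepsilon/2$ (and absorbing the constants hidden in $\sigma$) leaves all orders unchanged, and that the logarithmic gradient bound $B$ genuinely disappears into $\tilde{O}(\cdot)$ rather than contributing a polynomial factor.
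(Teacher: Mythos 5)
Your proposal is correct and follows exactly the route the paper intends: Corollary~\ref{mainBB} is obtained by instantiating Theorem~\ref{mainB} with the proxy functional $H_\sigma$, $\sigma=\Theta(\varepsilon/|\mathcal{S}|)$, so that $\beta=2\sigma^{-1}=\Theta(|\mathcal{S}|/\varepsilon)$ and $B=O(\log(|\mathcal{S}|/\varepsilon))$ is absorbed into $\tilde{O}(\cdot)$, and then undoing the smoothing via Lemma~\ref{lemGRAD}(D), precisely mirroring how Corollary~\ref{mainT22} follows from Theorem~\ref{mainT}. Your bookkeeping of the $\beta^3/\varepsilon^3$ term turning into $|\mathcal{S}|^3/\varepsilon^6$ matches the paper's remark that the extra factors are the price of the imposed smoothing.
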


\begin{algorithm}[t!]
\caption{Sample-based planning for an unknown MDP.}
\label{orcA}
\begin{algorithmic}[1]
\STATE \textbf{Input:} Reward $r$, error tolerance $\varepsilon>0$, exact planning oracle tolerance $\varepsilon_1>0$, oversampling parameter $m$, number of rollouts $n$, rollout length $t_0$.
\STATE Initialize a persistent data structure $C\in\reals^{|\mathcal{S}|^2\times |\mathcal{A}|}$, which is maintained across different calls to the planning algorithm to keep transition counts, to $C(s'|s,a)=0$ for every $(s',s,a)\in\mathcal{S}^2\times\mathcal{A}$.
\REPEAT
\STATE Declare $\mathcal{K}=\{s: \min_{a\in\mathcal{A}} \sum_{s'\in\mathcal{S}} C(s'|s,a) \geq m\}$, $\hat{P}(s'|s,a)=\begin{cases}\frac{C(s'|s,a)}{\sum_{s'\in\mathcal{S}} C(s'|s,a)}, & \text{if }s\in \mathcal{K} \\ \mathbf{1}_{s'=s}. & \text{otherwise}.\end{cases}$
\STATE Define the reward function as $r_\mathcal{K}(s)= \begin{cases} r(s), & \text{if }s\in\mathcal{K}\\ B. & \text{otherwise}\end{cases}$.
\STATE Compute an (approximately) optimal policy on the MDP induced by $\hat{P}$ and reward $r_\mathcal{K}$. This task is purely computational, and can be done as indicated in Section~\ref{know}. Also, modify the policy so that on every state $s\in \mathcal{S}-\mathcal{K}$, it chooses the least performed action.
\[ 
\pi(s) = \begin{cases}(\Pi\left(r_\mathcal{K}, \varepsilon_1 \right))(s) & \text{if }s\in\mathcal{K}, \\ \argmin_{a\in\mathcal{A}} \sum_{s'\in\mathcal{S}}C(s'|s,a) & \text{otherwise}\end{cases}\label{consPI}
\]
\STATE Run $\pi$ on the true MDP $\mathcal{M}$ to obtain $n$ independently sampled $t_0$-length trajectories $(\tau_1,\dots \tau_n)$, and increment the corresponding counts in $C(s'|s,a)$.
\STATE If and only if no trajectory $\tau_i$ contains a state $s\in \mathcal{S}-\mathcal{K}$, mark $\pi$ as \emph{stable}. \label{cert}
\UNTIL{$\pi$ is \emph{stable}.}
\RETURN $\pi$. 
\end{algorithmic}
\end{algorithm}

\begin{algorithm}[t!]
\caption{Sample-based estimate of the state distribution.}
\label{orcB}
\begin{algorithmic}[1]
\STATE \textbf{Input:} A policy $\pi$, termination length $t_0$, oversampling parameter $m$.
\STATE Sample $m$ trajectories $(\tau_0, \dots \tau_{m-1})$ of length $t_0$ following the policy $\pi$.
\STATE For every $t< t_0$, calculate the empirical state distribution $\hat{d}_{t,\pi}$. \[d_{t,\pi}(s) = \frac{|\{i< m: \tau_i=(s_0, a_0, \dots) \text{ with } s_t=s \}|}{m}\]
\RETURN $\hat{d}_\pi = \frac{1-\gamma}{1-\gamma^{t_0}}\sum_{t=0}^{t_0-1} \gamma^t \hat{d}_{t,\pi} $
\end{algorithmic}
\end{algorithm}

Before we state the proof, we note the following lemmas. The first is an adaptation of the analysis of the $E^3$ algorithm. The second is standard. We only include the second for completeness. The proofs of these may be found in the appendix.

\begin{lemma} \label{lorcA}
For any reward function $r$ with $\|r\|_\infty \leq B$, $\varepsilon>0$, with $\varepsilon_1=0.1B^{-1}\varepsilon, m=\frac{32 B^2 |\mathcal{S}|\log \frac{2|\mathcal{S}|}{\delta}}{(1-\gamma)^2(0.1\varepsilon)^2}, n=\frac{B\log \frac{32|\mathcal{S}|^2 |\mathcal{A}|\log \frac{2|\mathcal{S}|}{\delta}}{(1-\gamma)^2 (0.1\varepsilon)^2 \delta}}{0.1\varepsilon},t_0=\frac{\log \frac{0.1\varepsilon}{\log |\mathcal{S}|}}{\log \gamma}$, Algorithm~\ref{mainB} guarantees with probability $1-\delta$
\[ V_\pi \geq \max_\pi V_\pi -\varepsilon. \]
Furthermore, note that if Algorithm~\ref{mainB} is invoked $T$ times (on possibly different reward functions), the total number of episodes sampled across all the invocations is $n(T+m|\mathcal{S}||\mathcal{A}|)=\tilde{O}\left(\frac{BT}{\varepsilon}+ \frac{B^3|\mathcal{S}|^2 |\mathcal{A}|}{\varepsilon^3(1-\gamma)^2}\right)$, each episode being of length $t_0$.
\end{lemma}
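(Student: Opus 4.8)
The plan is to follow the explore-or-exploit template of the $E^3$ analysis, comparing three objects on the truncated horizon of length $t_0$: the true MDP $\mathcal{M}$ with reward $r$, the empirical bonus-augmented model $\hat{\mathcal{M}}$ (transitions $\hat P$, reward $r_\mathcal{K}$, which equals $r$ on $\mathcal{K}$ and the maximal bonus $B$ off it), and the policy $\pi$ that the loop eventually certifies as \emph{stable}. First I would establish model accuracy on the known set. Every pair $(s,a)$ with $s\in\mathcal{K}$ has been sampled at least $m$ times, so an $L_1$ concentration bound for an empirical distribution over $|\mathcal{S}|$ outcomes, union-bounded over all state-action pairs, yields $\|\hat P(\cdot|s,a)-P(\cdot|s,a)\|_1 \le O\!\left(\sqrt{|\mathcal{S}|\log(|\mathcal{S}|/\delta)/m}\right)$ simultaneously with probability $1-\delta/2$. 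The stated choice of $m$ is precisely the value that drives this error below $O\!\left((1-\gamma)\varepsilon/B\right)$, so that the accumulated per-step model error over the effective horizon $\tfrac{1}{1-\gamma}$ contributes at most $O(\varepsilon)$ to any value.

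The technical heart is a simulation lemma with escape. For a policy $\pi$ I would couple its trajectory in $\mathcal{M}$ and in $\hat{\mathcal{M}}$ under shared randomness (maximal coupling at each transition), so that the two trajectories agree, and accrue identical reward, up to the first time $\tau$ at which either a transition mismatch occurs inside $\mathcal{K}$ or the path leaves $\mathcal{K}$. Since rewards are normalized by $(1-\gamma)$ and bounded by $B$, any divergence after $\tau$ changes the value by at most $2B$, giving $|V^{\mathcal{M}}_\pi - V^{\hat{\mathcal{M}}}_\pi| \le 2B\,\Pr[\tau<\infty]\;+\;B\gamma^{t_0}$, where the mismatch contribution to $\Pr[\tau<\infty]$ is bounded by $\tfrac{1}{1-\gamma}\max_{s\in\mathcal{K},a}\tfrac12\|\hat P(\cdot|s,a)-P(\cdot|s,a)\|_1$, the escape contribution by the escape probability $p_{\mathrm{esc}}$ of $\pi$, and the truncation term by the choice $t_0=\tfrac{\log(0.1\varepsilon/\log|\mathcal{S}|)}{\log\gamma}$ making $\gamma^{t_0}$ negligible. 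The escape probability of the returned policy is controlled by the stability certificate: if the true per-episode escape probability exceeded $p_0=0.1\varepsilon/B$, then the chance that all $n$ sampled rollouts avoid $\mathcal{S}\setminus\mathcal{K}$ would be at most $(1-p_0)^n\le e^{-np_0}$, which the choice $n=\tfrac{B}{0.1\varepsilon}\log(\cdots/\delta)$ makes smaller than the per-iteration failure budget; hence \emph{stable} certifies $p_{\mathrm{esc}}\le 0.1\varepsilon/B$ and $B\,p_{\mathrm{esc}}\le 0.1\varepsilon$.

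With these two tools the value chain closes in the standard way. Let $\pi^\star$ be optimal in $\mathcal{M}$ for $r$. Optimism gives $V^{\hat{\mathcal{M}}}_{\pi^\star}\ge V^{\mathcal{M}}_{\pi^\star}-O(\varepsilon)$, because on $\mathcal{K}$ the model and reward match up to the estimation error, while any escape of $\pi^\star$ only collects the bonus $B$, which dominates the true reward. The exact planning subroutine makes the returned $\pi$ satisfy $V^{\hat{\mathcal{M}}}_\pi\ge \max_{\pi'}V^{\hat{\mathcal{M}}}_{\pi'}-\varepsilon_1\ge V^{\hat{\mathcal{M}}}_{\pi^\star}-\varepsilon_1$. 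Finally the simulation lemma, applied to the stable $\pi$ (whose trajectory stays in $\mathcal{K}$, where $r_\mathcal{K}=r$), transfers back: $V^{\mathcal{M}}_\pi\ge V^{\hat{\mathcal{M}}}_\pi-O(\varepsilon)$. Concatenating, $V^{\mathcal{M}}_\pi\ge \max_{\pi'}V^{\mathcal{M}}_{\pi'}-\varepsilon_1-O(\varepsilon)$, and setting each of the model-error, escape, truncation, and planning terms to roughly $0.1\varepsilon$ yields the claimed $V_\pi\ge\max_\pi V_\pi-\varepsilon$. All high-probability events are collected by a single union bound over the at most $T+m|\mathcal{S}||\mathcal{A}|$ iterations, whose size is absorbed into the logarithmic factor of $n$.

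For the episode count I would bound the number of loop iterations. Each invocation contributes exactly one \emph{stable} (final) iteration, for $T$ in total. A non-stable iteration is one whose rollouts reach $\mathcal{S}\setminus\mathcal{K}$, and there the least-performed-action rule increments the count of a not-yet-known state-action pair; since the counts are persistent across the $T$ calls and a state enters $\mathcal{K}$ after each of its $|\mathcal{A}|$ actions is tried $m$ times, at most $m|\mathcal{S}||\mathcal{A}|$ such increments can ever occur, so there are at most $m|\mathcal{S}||\mathcal{A}|$ non-stable iterations overall. Each iteration uses $n$ episodes of length $t_0$, giving the stated $n(T+m|\mathcal{S}||\mathcal{A}|)=\tilde{O}\!\left(\tfrac{BT}{\varepsilon}+\tfrac{B^3|\mathcal{S}|^2|\mathcal{A}|}{\varepsilon^3(1-\gamma)^2}\right)$. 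I expect the main obstacle to be the simulation lemma with escape: making the coupling precise so that model error on $\mathcal{K}$, the certified escape probability, and the discount truncation combine additively, while simultaneously getting the \emph{optimism} direction right, namely that replacing unknown-state rewards by the bonus $B$ only raises $V^{\hat{\mathcal{M}}}_{\pi^\star}$ relative to $V^{\mathcal{M}}_{\pi^\star}$ rather than lowering it.
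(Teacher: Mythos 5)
Your proposal is correct and follows essentially the same route as the paper: an $E^3$-style analysis that chains optimism on the bonus-augmented known-state MDP, an $L_1$-concentration-based simulation bound, the planning error $\varepsilon_1$, and the escape probability certified by the $n$ stable rollouts, together with the identical counting argument (at most $m|\mathcal{S}||\mathcal{A}|$ exploration iterations plus $T$ stable ones, each costing $n$ episodes). The only cosmetic difference is that you prove the simulation-with-escape decomposition directly via a trajectory coupling, whereas the paper obtains the same three error terms by citing Lemmas 8.4.4 and 8.5.4 of Kakade's thesis, which factor through the absorbing induced MDP $\mathcal{M}_\mathcal{K}$.
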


\begin{lemma}\label{lorcB}
For any $\varepsilon_0, \delta>0$, when Algorithm~\ref{orcB} is run with $m=\frac{200}{\varepsilon_0^2}\log \frac{2|\mathcal{S}|\log 0.1\varepsilon}{\delta \log \gamma}$, $t_0=\frac{\log 0.1\varepsilon_0}{\log \gamma}$, $\hat{d}_\pi$ satisfies $\|\hat{d}_\pi-d_\pi\|_\infty \leq \varepsilon_0 $ with probability at least $1-\delta$. In this process, the algorithm samples $m$ episodes of length $t_0$.
\end{lemma}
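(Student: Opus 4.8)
The plan is to split the estimation error into a deterministic truncation error and a stochastic sampling error and control each separately. Write $\bar{d}_\pi = \frac{1-\gamma}{1-\gamma^{t_0}}\sum_{t=0}^{t_0-1}\gamma^t d_{t,\pi}$ for the ``idealized'' truncated-and-renormalized distribution that Algorithm~\ref{orcB} would output if it had access to the exact $t$-step distributions $d_{t,\pi}$ rather than their empirical counterparts $\hat{d}_{t,\pi}$. By the triangle inequality, $\|\hat d_\pi - d_\pi\|_\infty \le \|\hat d_\pi - \bar d_\pi\|_\infty + \|\bar d_\pi - d_\pi\|_\infty$, so it suffices to bound the sampling term by $0.9\varepsilon_0$ and the truncation term by $0.1\varepsilon_0$.

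For the truncation term I would use that $d_{t,\pi}(s)\in[0,1]$ and that the renormalization constant $\frac{1-\gamma}{1-\gamma^{t_0}}$ exactly makes $\bar d_\pi$ a probability distribution (the coefficients $\frac{1-\gamma}{1-\gamma^{t_0}}\gamma^t$ over $0\le t<t_0$ sum to one). A direct coordinate-wise calculation, separating the in-horizon mass from the discounted tail beyond $t_0$ and using the crude bound $d_{t,\pi}(s)\le 1$, gives $|\bar d_\pi(s)-d_\pi(s)|\le \gamma^{t_0}$ for every $s$. The choice $t_0=\frac{\log 0.1\varepsilon_0}{\log\gamma}$ yields $\gamma^{t_0}=0.1\varepsilon_0$, so $\|\bar d_\pi-d_\pi\|_\infty\le 0.1\varepsilon_0$.

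For the sampling term I would exploit that for each \emph{fixed} pair $(t,s)$ the quantity $\hat d_{t,\pi}(s)$ is the empirical mean of $m$ i.i.d.\ Bernoulli$(d_{t,\pi}(s))$ indicators $\mathbf{1}[s_t^{(i)}=s]$, one per independently sampled trajectory. Hoeffding's inequality then gives $\Pr[|\hat d_{t,\pi}(s)-d_{t,\pi}(s)|>\epsilon']\le 2e^{-2m\epsilon'^2}$. Taking a union bound over the $t_0|\mathcal{S}|$ pairs and requiring $2t_0|\mathcal{S}|\,e^{-2m\epsilon'^2}\le\delta$, the event $\max_{t,s}|\hat d_{t,\pi}(s)-d_{t,\pi}(s)|\le\epsilon'$ holds with probability at least $1-\delta$. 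On this event, since $\hat d_\pi(s)-\bar d_\pi(s)$ is a convex combination (coefficients summing to one) of the per-$t$ errors, $\|\hat d_\pi-\bar d_\pi\|_\infty\le\epsilon'$. Choosing $\epsilon'=0.9\varepsilon_0$ and solving the Hoeffding requirement for $m$ gives $m=\Theta(\varepsilon_0^{-2}\log(t_0|\mathcal{S}|/\delta))$; substituting $t_0=\frac{\log 0.1\varepsilon_0}{\log\gamma}$ reproduces the stated $m$, with the constant $200$ left loose to absorb $1/(2\cdot 0.9^2)$ together with the slack between $0.9\varepsilon_0$ and $\varepsilon_0$. Combining the two bounds yields $\|\hat d_\pi-d_\pi\|_\infty\le\varepsilon_0$ with probability $\ge 1-\delta$, and the claim of $m$ episodes each of length $t_0$ is immediate from the algorithm.

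The one genuine subtlety — and the thing to get right — is that the estimates $\hat d_{t,\pi}(s)$ for different $t$ are \emph{not} independent, since they are read off the same trajectories. I would sidestep this entirely: rather than concentrating the weighted sum $\sum_t \gamma^t\hat d_{t,\pi}(s)$ directly (which would force one to handle this dependence), I bound each time slice $(t,s)$ separately via Hoeffding over the independent trajectories and then glue the slices together using only that the discount weights form a convex combination. Independence is thus invoked solely across the $m$ trajectories at a single time step, where it genuinely holds, and never across time. The only other point to watch is the harmless off-by-one in the starting index of the discounted sum; taking the sum from $t=0$, consistent with Algorithm~\ref{orcB}, the computation above goes through verbatim.
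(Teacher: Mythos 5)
Your proposal is correct and follows essentially the same route as the paper's proof: both decompose the error into a deterministic truncation/renormalization term bounded via $\gamma^{t_0}=0.1\varepsilon_0$ and a sampling term handled by applying Hoeffding's inequality to each fixed pair $(t,s)$ across the $m$ independent trajectories, then taking a union bound over all $t_0|\mathcal{S}|$ pairs and gluing the slices with the convex discount weights. Your explicit introduction of the idealized distribution $\bar d_\pi$ and your remark about avoiding dependence across time steps merely make explicit what the paper's terser argument does implicitly, so there is no substantive difference.
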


\begin{proof}[Proof of Theorem~\ref{mainB}]
The claim follows immediately from the invocations of the two lemmas above with the parameter settings proposed in Theorem~\ref{mainT}.
\end{proof}


The following notions \& lemmas are helpful in proving Lemma~\ref{lorcA}. We shall call a state $s\in\mathcal{K}$ $m$-\emph{known} if, for all actions $a\in\mathcal{A}$, action $a$ has been executed at state $s$ at least $m$ times. For any MDP $\mathcal{M}=(\mathcal{S},\mathcal{A}, r,P,\gamma)$ and a set of $m$-known states $\mathcal{K}\subseteq \mathcal{S}$, define an induced MDP $\mathcal{M}_{\mathcal{K}}=(\mathcal{S},\mathcal{A}, r_\mathcal{K},P_\mathcal{K},\gamma)$ so that the states absent from $\mathcal{K}$ are absorbing and maximally rewarding.
\begin{align}
    r_\mathcal{K}(s,a) &= \begin{cases}r(s,a) & \text{if } s\in\mathcal{K}, \\ B& \text{otherwise},\end{cases} \\
    P_\mathcal{K}(s'|s,a) &= \begin{cases}P(s'|s,a) & \text{if } s\in\mathcal{K}, \\ \mathbf{1}_{s'=s} & \text{otherwise}.\end{cases}
\end{align}
The state distribution induced by a policy $\pi$ on $\mathcal{M}_\mathcal{K}$ shall be denoted by $d_{\mathcal{M}_\mathcal{K}, \pi}$. Often, in absence of an exact knowledge of the transition matrix $P$, the policy $\pi$ may be executed on an estimated transition matrix $\hat{P}$. We shall use $d_{\hat{\mathcal{M}}_\mathcal{K},\pi}$ to denote the state distribution of the policy $\pi$ executed on the MDP with the transition matrix $\hat{P}$. Also, define the following.
\begin{align*}
P_{\mathcal{K}}(\text{escape}|\pi) &= \mathop{\mathbb{E}}_{\tau\sim P(\cdot|\pi)}  \mathbf{1}_{\exists t< t_0:s_t\not\in \mathcal{K}, \tau=(s_0, a_0, \dots)},\\
P_{\mathcal{K},\gamma}(\text{escape}|\pi) &= (1-\gamma) \mathop{\mathbb{E}}_{\tau\sim P(\cdot|\pi)} \sum_{t=0}^\infty \gamma^t \mathbf{1}_{\substack{s_u\in\mathcal{K} \forall u<t \text{ and }\\ s_t\not\in \mathcal{K}, \tau=(s_0, a_0, \dots)}}.
\end{align*}
Note that $P_{\mathcal{K}}(\text{escape}|\pi)\geq P_{\mathcal{K},\gamma}(\text{escape}|\pi)-\gamma^{t^0}$.

\begin{lemma}\label{k1}(Lemma 8.4.4\cite{kakade2003sample})
For any policy $\pi$, the following statements are valid.
\begin{align*}
\langle d_\pi, r\rangle &\geq \langle d_{\mathcal{M}_\mathcal{K},\pi}, r_\mathcal{K}\rangle - P_{\mathcal{K},\gamma}(\text{escape}|\pi)\|r_\mathcal{K}\|_\infty , \\
\langle d_{\mathcal{M}_\mathcal{K},\pi}, r_\mathcal{K}\rangle &\geq\langle d_\pi, r\rangle.
\end{align*}
\end{lemma}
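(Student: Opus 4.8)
The plan is to prove both inequalities at once through a single coupling between the trajectory distributions that $\pi$ induces on the true MDP $\mathcal{M}$ and on the induced MDP $\mathcal{M}_\mathcal{K}$. The structural fact I would exploit is that $\mathcal{M}$ and $\mathcal{M}_\mathcal{K}$ are indistinguishable as long as the trajectory stays inside $\mathcal{K}$: for $s\in\mathcal{K}$ we have $P_\mathcal{K}(\cdot|s,a)=P(\cdot|s,a)$ and $r_\mathcal{K}(s,a)=r(s,a)$, while states outside $\mathcal{K}$ are absorbing with reward $B$. So I would construct trajectories $\tau=(s_0,a_0,\dots)$ and $\tilde\tau=(\tilde s_0,\tilde a_0,\dots)$ on a common probability space so that $\tilde s_t=s_t$ and $\tilde a_t=a_t$ for every $t\le T_{\mathrm{esc}}$, where $T_{\mathrm{esc}}=\inf\{t:s_t\notin\mathcal{K}\}$ is the first escape time (set to $+\infty$ if the trajectory never leaves $\mathcal{K}$). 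This coupling is legitimate precisely because the two chains share the transition kernel and the policy on $\mathcal{K}$; after $T_{\mathrm{esc}}$ the chain $\tilde\tau$ is absorbed at $s_{T_{\mathrm{esc}}}$, while $\tau$ keeps evolving under $P$. Under it, both discounted values split into a common ``pre-escape'' contribution $(1-\gamma)\mathbb{E}[\sum_{t<T_{\mathrm{esc}}}\gamma^t r(s_t,a_t)]$ plus a ``post-escape'' tail that is the only place the two MDPs differ.

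For the second inequality I would argue term by term along each coupled pair of trajectories. For $t<T_{\mathrm{esc}}$ the summands coincide. For $t\ge T_{\mathrm{esc}}$ the state $\tilde s_t\notin\mathcal{K}$ is absorbing and maximally rewarding, so $r_\mathcal{K}(\tilde s_t,\tilde a_t)=B\ge r(s_t,a_t)$, using $\|r\|_\infty\le B$. Hence the integrand of $\langle d_{\mathcal{M}_\mathcal{K},\pi},r_\mathcal{K}\rangle$ dominates that of $\langle d_\pi,r\rangle$ at every timestep of every trajectory, and taking the discounted expectation gives $\langle d_{\mathcal{M}_\mathcal{K},\pi},r_\mathcal{K}\rangle\ge\langle d_\pi,r\rangle$.

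For the first inequality I would subtract the two values; the pre-escape contributions cancel exactly, leaving
\[
\langle d_{\mathcal{M}_\mathcal{K},\pi},r_\mathcal{K}\rangle-\langle d_\pi,r\rangle=(1-\gamma)\,\mathbb{E}\Big[\textstyle\sum_{t\ge T_{\mathrm{esc}}}\gamma^t\big(r_\mathcal{K}(\tilde s_t,\tilde a_t)-r(s_t,a_t)\big)\Big].
\]
Each summand is bounded by $\|r_\mathcal{K}\|_\infty$ in magnitude (both rewards are), and the discounted weight of the post-escape tail telescopes, $(1-\gamma)\sum_{t\ge T_{\mathrm{esc}}}\gamma^t=\gamma^{T_{\mathrm{esc}}}$, whose expectation is exactly the object isolated by the first-exit indicator in $P_{\mathcal{K},\gamma}(\text{escape}|\pi)=(1-\gamma)\mathbb{E}[\sum_t\gamma^t\mathbf{1}[t=T_{\mathrm{esc}}]]$. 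Controlling the tail by $\|r_\mathcal{K}\|_\infty$ times this escape mass yields $\langle d_\pi,r\rangle\ge\langle d_{\mathcal{M}_\mathcal{K},\pi},r_\mathcal{K}\rangle-P_{\mathcal{K},\gamma}(\text{escape}|\pi)\,\|r_\mathcal{K}\|_\infty$.

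The hard part will be the bookkeeping of the discounted escape measure and, in particular, making the post-escape reward gap line up with a single factor of $\|r_\mathcal{K}\|_\infty$. The subtlety is that the escape mass $P_{\mathcal{K},\gamma}$ places weight only at the escape instant $t=T_{\mathrm{esc}}$, whereas the reward discrepancy between $\mathcal{M}$ and $\mathcal{M}_\mathcal{K}$ persists over the entire remaining horizon; one therefore has to verify that the value normalization (the $(1-\gamma)$ prefactor) and the discounting conspire so that no stray horizon factor $\tfrac{1}{1-\gamma}$ survives, as in the convention of the cited Lemma~8.4.4. A secondary point to check is that the coupling goes through verbatim for non-stationary, history-dependent $\pi$, which it does: the coupled histories agree up to $T_{\mathrm{esc}}$, and the policy acts only on that (identical) history.
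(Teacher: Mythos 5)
Your coupling at the first escape time $T_{\mathrm{esc}}$ is the right mechanism --- it is essentially the argument behind the cited Lemma 8.4.4 of \cite{kakade2003sample}, and note the paper offers no proof of its own here, only that citation --- and your proof of the second inequality is correct (it needs only the standing assumption $\|r\|_\infty\le B$, so that every post-escape reward in $\mathcal{M}_\mathcal{K}$ is $B\ge r(s_t,a_t)$). The gap is exactly the point you flagged as ``the hard part'' and then asserted away: the $(1-\gamma)$ bookkeeping does \emph{not} come out as claimed. Your own (correct) computation gives $(1-\gamma)\sum_{t\ge T_{\mathrm{esc}}}\gamma^t=\gamma^{T_{\mathrm{esc}}}$, hence
\[
\langle d_{\mathcal{M}_\mathcal{K},\pi},r_\mathcal{K}\rangle-\langle d_\pi,r\rangle \;\le\; \mathbb{E}\bigl[\gamma^{T_{\mathrm{esc}}}\bigr]\cdot\sup_{s,a}\bigl(r_\mathcal{K}(s,a)-r(s,a)\bigr),
\]
whereas the paper defines $P_{\mathcal{K},\gamma}(\text{escape}|\pi)=(1-\gamma)\,\mathbb{E}[\gamma^{T_{\mathrm{esc}}}]$, with an extra $(1-\gamma)$ prefactor. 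So the tail mass your argument produces equals $P_{\mathcal{K},\gamma}(\text{escape}|\pi)/(1-\gamma)$, not $P_{\mathcal{K},\gamma}(\text{escape}|\pi)$; the identification ``whose expectation is exactly the object isolated by the first-exit indicator'' is off by a factor $1/(1-\gamma)$, and what you have proven is strictly weaker than the stated bound. (A secondary slip: the per-step gap is $B-r(s_t,a_t)\le 2\|r_\mathcal{K}\|_\infty$, not $\|r_\mathcal{K}\|_\infty$, unless rewards are nonnegative; in this paper the rewards $r=\nabla R$ can be negative.)

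Moreover, this gap cannot be closed, because the first inequality as printed (with the paper's own definitions) is false. Take $\mathcal{S}=\{s_0,s_1\}$ with a single action, deterministic transitions $s_0\to s_1$ and $s_1\to s_1$, initial state $s_0$, $\mathcal{K}=\{s_0\}$, $r\equiv 0$, $B=1$. Then $\langle d_\pi,r\rangle=0$, while $\langle d_{\mathcal{M}_\mathcal{K},\pi},r_\mathcal{K}\rangle=(1-\gamma)\sum_{t\ge 1}\gamma^t=\gamma$ and $P_{\mathcal{K},\gamma}(\text{escape}|\pi)=(1-\gamma)\gamma$, so the claimed inequality reads $0\ge\gamma-(1-\gamma)\gamma=\gamma^2$, which fails for every $\gamma\in(0,1)$. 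The version your coupling actually establishes, namely $\langle d_\pi,r\rangle\ge\langle d_{\mathcal{M}_\mathcal{K},\pi},r_\mathcal{K}\rangle-\mathbb{E}[\gamma^{T_{\mathrm{esc}}}]\cdot\sup_{s,a}(r_\mathcal{K}-r)$, is tight in this example and is also all that the downstream proof of Lemma~\ref{lorcA} uses, since there one only needs $\mathbb{E}[\gamma^{T_{\mathrm{esc}}}]\le P_{\mathcal{K}}(\text{escape}|\pi)+\gamma^{t_0}$. In short: your approach is sound and essentially completes the proof of the \emph{corrected} statement (with the un-normalized discounted escape mass $\mathbb{E}[\gamma^{T_{\mathrm{esc}}}]$ in place of $P_{\mathcal{K},\gamma}$), but the final step of your write-up asserts an identity that is false, and the inequality as stated in the lemma is not provable.
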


\begin{lemma}\label{k2}(Lemma 8.5.4\cite{kakade2003sample})
If, for all $(s,a)\in\mathcal{S}\times \mathcal{A}$, $\|\hat{P}(\cdot|s,a)-P_\mathcal{K}(\cdot|s,a)\|_1\leq \varepsilon$, then for any reward $r$, policy $\pi$, it is true that
\[ |\langle d_{\mathcal{M}_\mathcal{K},\pi},r\rangle - \langle d_{\hat{\mathcal{M}}_\mathcal{K},\pi}, r\rangle | \leq \frac{\varepsilon}{1-\gamma}\]
\end{lemma}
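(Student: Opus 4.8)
The plan is to prove this as a \emph{simulation lemma}: the two inner products are the normalized discounted values of the same policy $\pi$ in two MDPs that share the reward $r$ but differ in their transition kernels ($P_\mathcal{K}$ versus $\hat P$), so I want to show the value is Lipschitz in the transition kernel with the stated $1/(1-\gamma)$ horizon dependence. The cleanest route is to make everything linear-algebraic using the closed form from Lemma~\ref{lemDPI}. For a policy $\pi$, write the policy-averaged state-to-state matrices $P_{\mathcal{K},\pi}(s'|s)=\sum_a \pi(a|s)P_\mathcal{K}(s'|s,a)$ and $\hat P_\pi(s'|s)=\sum_a \pi(a|s)\hat P(s'|s,a)$. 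By Lemma~\ref{lemDPI}(3), $d_{\mathcal{M}_\mathcal{K},\pi}=(1-\gamma)(I-\gamma P_{\mathcal{K},\pi})^{-1}d_0$ and $d_{\hat{\mathcal{M}}_\mathcal{K},\pi}=(1-\gamma)(I-\gamma \hat P_\pi)^{-1}d_0$, so the quantity to bound is
\[ \langle d_{\mathcal{M}_\mathcal{K},\pi},r\rangle - \langle d_{\hat{\mathcal{M}}_\mathcal{K},\pi},r\rangle = (1-\gamma)\, r^\top\!\left[(I-\gamma P_{\mathcal{K},\pi})^{-1}-(I-\gamma \hat P_\pi)^{-1}\right]d_0. \]

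Next I would apply the resolvent identity $A^{-1}-B^{-1}=A^{-1}(B-A)B^{-1}$ with $A=I-\gamma P_{\mathcal{K},\pi}$ and $B=I-\gamma \hat P_\pi$, noting $B-A=\gamma(P_{\mathcal{K},\pi}-\hat P_\pi)$. This factors the value gap into three interpretable pieces:
\[ (1-\gamma)\gamma\; \underbrace{r^\top(I-\gamma P_{\mathcal{K},\pi})^{-1}}_{\text{left: a value vector}}\;\underbrace{(P_{\mathcal{K},\pi}-\hat P_\pi)}_{\text{transition discrepancy}}\;\underbrace{(I-\gamma \hat P_\pi)^{-1}d_0}_{\text{right: an occupancy vector}}. \]
I then bound the left and right pieces in dual norms. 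Since $P_{\mathcal{K},\pi}$ is column-stochastic (its columns are the next-state distributions), the powers $(P_{\mathcal{K},\pi}^\top)^t$ are row-stochastic, so $(I-\gamma P_{\mathcal{K},\pi})^{-1}=\sum_{t\ge0}\gamma^t P_{\mathcal{K},\pi}^t$ gives $\|r^\top(I-\gamma P_{\mathcal{K},\pi})^{-1}\|_\infty\le \sum_t\gamma^t\|r\|_\infty=\|r\|_\infty/(1-\gamma)$; this is just the statement that a discounted value function is bounded by $\|r\|_\infty/(1-\gamma)$. The right vector equals $\frac{1}{1-\gamma}d_{\hat{\mathcal{M}}_\mathcal{K},\pi}$, a probability distribution scaled by $1/(1-\gamma)$, hence has $\ell_1$-norm $1/(1-\gamma)$.

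The one genuinely lemma-specific step is converting the per-state-action hypothesis into control on the middle factor: for each $s$, $\|(P_{\mathcal{K},\pi}-\hat P_\pi)(\cdot|s)\|_1=\|\sum_a\pi(a|s)\big(P_\mathcal{K}(\cdot|s,a)-\hat P(\cdot|s,a)\big)\|_1\le\sum_a\pi(a|s)\,\varepsilon=\varepsilon$ by convexity, so applying $(P_{\mathcal{K},\pi}-\hat P_\pi)$ to the (scaled) distribution $\frac{1}{1-\gamma}d_{\hat{\mathcal{M}}_\mathcal{K},\pi}$ produces a vector of $\ell_1$-norm at most $\varepsilon/(1-\gamma)$. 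Pairing the left $\ell_\infty$ bound with this $\ell_1$ bound via H\"older and collecting the prefactor $(1-\gamma)\gamma$ gives $\frac{\gamma\varepsilon\|r\|_\infty}{1-\gamma}\le\frac{\varepsilon}{1-\gamma}$ under the normalization $\|r\|_\infty\le1$, as claimed.

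The main obstacle is purely the norm bookkeeping: one must pair the value vector in $\ell_\infty$ (where it stays bounded by $1/(1-\gamma)$) against the occupancy/perturbation in $\ell_1$, rather than using a single norm, and one must correctly account for the two factors of $1/(1-\gamma)$ (one from each resolvent) against the single $(1-\gamma)$ prefactor so that exactly one horizon factor survives. A self-contained alternative, if one prefers to avoid the matrix manipulations, is to telescope the Bellman recursions for the two value functions term by term and bound the accumulated one-step transition mismatch, which yields the same $\varepsilon/(1-\gamma)$ after summing the geometric discount series; I would keep this as a fallback but expect the resolvent argument above to be shorter.
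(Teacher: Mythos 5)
Your proof is correct, and it is worth noting at the outset that the paper itself contains no proof of this statement: Lemma~\ref{k2} is imported verbatim from Lemma 8.5.4 of \cite{kakade2003sample}, whose proof there is essentially the elementary telescoping argument you sketch as your fallback (bound the drift between the $t$-step state distributions of the two chains by $t\varepsilon$ and sum the discounted series). Your resolvent route is a genuinely different and arguably cleaner packaging of the same computation: writing both occupancies via Lemma~\ref{lemDPI}, applying $A^{-1}-B^{-1}=A^{-1}(B-A)B^{-1}$, and then pairing the $\ell_\infty$ bound $\|r^\top(I-\gamma P_{\mathcal{K},\pi})^{-1}\|_\infty\le\|r\|_\infty/(1-\gamma)$ against the $\ell_1$ bound $\varepsilon/(1-\gamma)$ on the middle-times-right factor (your convexity step correctly controls the max column $\ell_1$ norm of the discrepancy matrix, which is exactly its $\ell_1\!\to\!\ell_1$ operator norm under the paper's column convention for $P_\pi$) yields $\gamma\varepsilon\|r\|_\infty/(1-\gamma)$ after the $(1-\gamma)\gamma$ prefactor. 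Two remarks. First, your observation that one needs $\|r\|_\infty\le 1$ is not a defect of your argument but of the statement: as written (``for any reward $r$'') the claim is false by rescaling $r$, so the normalization is implicit; indeed, when the paper invokes Lemma~\ref{k2} inside the proof of Lemma~\ref{lorcA} with the reward $r_\mathcal{K}$, whose values reach $B$, the displayed error term omits the factor $B$, and it is the choice of $m$ there (which carries a $B^2$) that silently absorbs it. Second, your resolvent argument as written covers stationary $\pi$ only, since Lemma~\ref{lemDPI} presumes a time-independent $P_\pi$, whereas the lemma nominally quantifies over all policies; this costs nothing for the paper's application (the policies to which Lemma~\ref{k2} is applied are stationary), and your telescoping fallback, run with time-indexed transition matrices, handles the general case.
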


\begin{lemma}\label{k3}(Folklore, eg. Lemma 8.5.5\cite{kakade2003sample})
When $m$ samples $\{x_1,\dots x_m\}$ are drawn from a distribution $P$, supported on a domain of size $d$, to construct an empirical distribution $\hat{P}(x) = \frac{\sum_{i=1}^m \mathbf{1}_{x_i=x}}{m}$, it is guaranteed that with probability $1-\delta$
\[ \|P-\hat{P}\|_1\leq \sqrt{\frac{8d\log\frac{2d}{\delta}}{m}}.\]
\end{lemma}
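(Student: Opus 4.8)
The plan is to prove the $\ell_1$ concentration by reducing the $\ell_1$ norm to a maximum of many scalar empirical averages, applying a Hoeffding bound to each, and then union-bounding. The starting point is the variational identity $\|\hat{P}-P\|_1 = \max_{s\in\{-1,+1\}^d} \langle s,\hat{P}-P\rangle$, which rewrites the $\ell_1$ deviation as a maximum over the $2^d$ sign vectors. (Equivalently one may use $\|\hat{P}-P\|_1 = 2\max_{A\subseteq\{1,\dots,d\}}(\hat{P}(A)-P(A))$, a maximum over the $2^d$ subsets of the support; I work with the sign-vector form since it treats both tails uniformly.)

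For a \emph{fixed} sign vector $s$, the quantity $\langle s,\hat{P}-P\rangle = \frac{1}{m}\sum_{i=1}^m (s_{x_i} - \mathbb{E}_{x\sim P}[s_x])$ is an average of $m$ i.i.d.\ random variables, each bounded in $[-1,1]$ and with mean zero. Hoeffding's inequality then gives the subgaussian tail $\Pr[\langle s,\hat{P}-P\rangle > t] \le \exp(-mt^2/2)$. Taking a union bound over all $2^d$ sign vectors yields $\Pr[\|\hat{P}-P\|_1 > t] \le 2^d \exp(-mt^2/2)$. Setting the right-hand side equal to $\delta$ and solving gives $t = \sqrt{2(d\log 2 + \log(1/\delta))/m}$; loosening constants via $d\log 2 + \log(1/\delta) \le 4d\log(2d/\delta)$ (which holds since $\log(2d/\delta)\ge\log 2$ and $\log(2d/\delta)\ge\log(1/\delta)$) produces the stated bound $\sqrt{8d\log(2d/\delta)/m}$.

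The step I expect to be the crux is the interplay between the $2^d$ union bound and Hoeffding's exponent: the exponentially large number of events is exactly what must be controlled, and it is only because Hoeffding's tail decays like $e^{-mt^2/2}$ that the factor $2^d = e^{d\log 2}$ is absorbed \emph{inside} the exponent, contributing $d\log 2$ rather than a multiplicative factor of $d$. This is why the final bound scales like $\sqrt{d/m}$. A naive coordinatewise argument (apply Hoeffding to each $\hat{P}(x)$, union-bound over $d$ coordinates, then sum) would only control $\|\hat{P}-P\|_\infty$ and would lose a spurious factor of $d$ in passing to $\ell_1$, so the reduction to the full family of sign vectors (rather than to coordinates) is the essential move.

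As an alternative route giving the same $\sqrt{d}$ scaling, I would instead invoke McDiarmid's bounded-difference inequality: the map $(x_1,\dots,x_m)\mapsto\|\hat{P}-P\|_1$ changes by at most $2/m$ when a single sample is altered, so $\|\hat{P}-P\|_1$ concentrates around its mean at scale $\sqrt{\log(1/\delta)/m}$, and the mean is bounded by $\mathbb{E}\|\hat{P}-P\|_1 = \sum_x \mathbb{E}|\hat{P}(x)-P(x)| \le \sum_x \sqrt{P(x)(1-P(x))/m} \le \sqrt{d/m}$ via Jensen and Cauchy--Schwarz. Either route suffices, but the union-bound argument is the more direct match to the stated constant.
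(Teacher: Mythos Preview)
Your argument is correct and is the standard proof of this folklore bound. The paper does not actually supply a proof of Lemma~\ref{k3}; it merely cites it as folklore (Lemma~8.5.5 in \cite{kakade2003sample}) and uses it as a black box inside the proof of Lemma~\ref{lorcA}. Your reduction via the variational identity $\|\hat P - P\|_1 = \max_{s\in\{-1,+1\}^d}\langle s,\hat P - P\rangle$, followed by Hoeffding on each fixed sign vector and a union bound over the $2^d$ such vectors, is exactly the usual argument, and your constant-loosening step $d\log 2 + \log(1/\delta)\le 4d\log(2d/\delta)$ is valid (in fact $2d\log(2d/\delta)$ already suffices, so the stated constant $8$ is slightly generous). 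The McDiarmid alternative you sketch is also a legitimate route.
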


\begin{proof}[Proof of Lemma~\ref{lorcA}] 
The key observation in dealing with an unknown MDP is: either $\pi$, computed on the the transition $\hat{P}$, is (almost) optimal for the given reward signal on the true MDP, or it escapes the set of known states $\mathcal{K}$ quickly. If the former occurs, the requirement on the output of the algorithm is met. In case of the later, $\pi$ serves as a good policy to quickly explore new states -- this can happen only a finite number of times.

Let $\pi^*=\argmax_{\pi} V_\pi$. First, note that for any $\pi$ chosen in the Line~\ref{consPI}, we have
\ifdefined\arxiv
\begin{align*}
    V_\pi =& \langle d_{\pi}, r\rangle \\
    \geq& \langle d_{\mathcal{M}_\mathcal{K},\pi}, r_\mathcal{K}\rangle - (\gamma^{t_0} + P_{\mathcal{K}}(\text{escape}|\pi))B & \ref{k1}\\
    \geq& \langle d_{\hat{\mathcal{M}}_\mathcal{K},\pi}, r_\mathcal{K}\rangle - \frac{1}{1-\gamma}\sqrt{\frac{8|\mathcal{S}|\log \frac{2|\mathcal{S}|}{\delta}}{m}} - (\gamma^{t_0}+P_{\mathcal{K}}(\text{escape}|\pi))B & \ref{k2},\ref{k3}\\
    \geq& \langle d_{\hat{\mathcal{M}}_\mathcal{K},\pi^*}, r_\mathcal{K}\rangle - \varepsilon_1 - \frac{1}{1-\gamma}\sqrt{\frac{8|\mathcal{S}|\log \frac{2|\mathcal{S}|}{\delta}}{m}} -  (\gamma^{t_0}+P_{\mathcal{K}}(\text{escape}|\pi))B & \text{choice of } \pi\\   
    \geq& \langle d_{\mathcal{M}_\mathcal{K},\pi^*}, r_\mathcal{K}\rangle - \varepsilon_1 - \frac{2}{1-\gamma}\sqrt{\frac{8|\mathcal{S}|\log \frac{2|\mathcal{S}|}{\delta}}{m}} -  (\gamma^{t_0}+P_{\mathcal{K}}(\text{escape}|\pi))B & \ref{k2},\ref{k3}\\
    \geq& V_{\pi^*} - \varepsilon_1 - \frac{2}{1-\gamma}\sqrt{\frac{8|\mathcal{S}|\log \frac{2|\mathcal{S}|}{\delta}}{m}} -  (\gamma^{t_0}+P_{\mathcal{K}}(\text{escape}|\pi)) B & \ref{k1}
\end{align*}
\else
\begin{align*}
    &V_\pi = \langle d_{\pi}, r\rangle \\
    \geq& \langle d_{\mathcal{M}_\mathcal{K},\pi}, r_\mathcal{K}\rangle - (\gamma^{t_0} + P_{\mathcal{K}}(\text{escape}|\pi))B & \ref{k1}\\
    \geq& \langle d_{\hat{\mathcal{M}}_\mathcal{K},\pi}, r_\mathcal{K}\rangle - \frac{1}{1-\gamma}\sqrt{\frac{8|\mathcal{S}|\log \frac{2|\mathcal{S}|}{\delta}}{m}} \\ &- (\gamma^{t_0}+P_{\mathcal{K}}(\text{escape}|\pi))B & \ref{k2},\ref{k3}\\
    \geq& \langle d_{\hat{\mathcal{M}}_\mathcal{K},\pi^*}, r_\mathcal{K}\rangle - \varepsilon_1 - \frac{1}{1-\gamma}\sqrt{\frac{8|\mathcal{S}|\log \frac{2|\mathcal{S}|}{\delta}}{m}} \\ &-  (\gamma^{t_0}+P_{\mathcal{K}}(\text{escape}|\pi))B & \text{choice of } \pi\\   
    \geq& \langle d_{\mathcal{M}_\mathcal{K},\pi^*}, r_\mathcal{K}\rangle - \varepsilon_1 - \frac{2}{1-\gamma}\sqrt{\frac{8|\mathcal{S}|\log \frac{2|\mathcal{S}|}{\delta}}{m}} \\ &-  (\gamma^{t_0}+P_{\mathcal{K}}(\text{escape}|\pi))B & \ref{k2},\ref{k3}\\
    \geq& V_{\pi^*} - \varepsilon_1 - \frac{2}{1-\gamma}\sqrt{\frac{8|\mathcal{S}|\log \frac{2|\mathcal{S}|}{\delta}}{m}} \\ &-  (\gamma^{t_0}+P_{\mathcal{K}}(\text{escape}|\pi)) B & \ref{k1}
\end{align*}
\fi

If $P_{\mathcal{K}}(\text{escape}|\pi)>\Delta$, then the probability that $\pi$ doesn't escape $\mathcal{K}$ in $n$ trials is $e^{-n\Delta}$. Accounting for the failure probabilities with a suitable union bound, Line~\ref{cert} ensures that $\pi$ is marked {\em stable} only if $P_{\mathcal{K}}(\text{escape}|\pi)\leq \frac{\log (N\delta^{-1})}{n}$, where $N$ is the total number of times the inner loop is executed.

To observe the truth of the second part of the claim, note that every reiteration of the inner loop coincides with the exploration of some action at a $m$-\emph{unknown} state. There can be at most $m|\mathcal{S}||\mathcal{A}|$ such \emph{exploration} steps. Finally, each run of the inner loop samples $n$ episodes.
\end{proof}

\begin{proof}[Proof of Lemma~\ref{lorcB}.]
First note that it suffices to ensure for all $t<t_0$ simultaneously, it happens $\|d_{t,\pi}-\hat{d}_{t,\pi}\|_\infty \leq 0.1\varepsilon_0$. This is because
\ifdefined\arxiv
\begin{align*}
    \|d_\pi-\hat{d}_\pi\|_\infty &\leq \frac{1-\gamma}{(1-\gamma^{t_0})}\sum_{t=0}^{t_0-1} \gamma^t \|\hat{d}_{t,\pi}-(1-\gamma^{t_0})d_{t,\pi}\|_\infty + \gamma^{t_0} \\
    &\leq \frac{1-\gamma}{(1-\gamma^{t_0})}\sum_{t=0}^{t_0-1}\gamma^t \|\hat{d}_{t,\pi}-d_{t,\pi}\| + 0.3\varepsilon_0 \leq \varepsilon_0.
\end{align*}
\else
\begin{align*}
    &\|d_\pi-\hat{d}_\pi\|_\infty \\
    &\leq \frac{1-\gamma}{(1-\gamma^{t_0})}\sum_{t=0}^{t_0-1} \gamma^t \|\hat{d}_{t,\pi}-(1-\gamma^{t_0})d_{t,\pi}\|_\infty + \gamma^{t_0} \\
    &\leq \frac{1-\gamma}{(1-\gamma^{t_0})}\sum_{t=0}^{t_0-1}\gamma^t \|\hat{d}_{t,\pi}-d_{t,\pi}\| + 0.3\varepsilon_0 \leq \varepsilon_0.
\end{align*}
\fi
Since the trajectories are independently, $|\hat{d}_{t,\pi}(s)-d_{t,\pi}(s)| \leq \sqrt{\frac{2}{m}\log\frac{2}{\delta}}$ for each $t<t_0$ and state $s\in\mathcal{S}$ with probability $1-\delta$, by Hoeffding's inequality. A union bound over states and $t$ concludes the proof. 
\end{proof}

\section{Proof of Concept Experiments}  \label{sec:experiments}
\begin{figure*}[t!]
\centering
    \setlength\tabcolsep{0.5pt}
    \begin{tabular}{c c c}
    \begin{minipage}{0.33\linewidth}\center
        \texttt{MountainCar}
    \end{minipage} &
    \begin{minipage}{0.33\linewidth}\center
        \texttt{Pendulum}
    \end{minipage} &\begin{minipage}{0.05\linewidth}
        \texttt{Ant}
    \end{minipage}   \\\vspace{-0.5cm}
    \begin{minipage}{0.33\linewidth}
        \includegraphics[width=\textwidth]{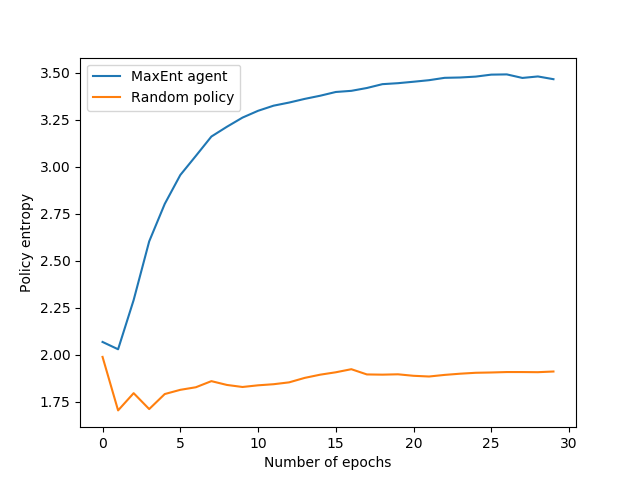}
        \subcaption{}\label{fig:exp1-mc}
    \end{minipage} &
    \begin{minipage}[]{0.33\linewidth}
      \includegraphics[width=\textwidth]{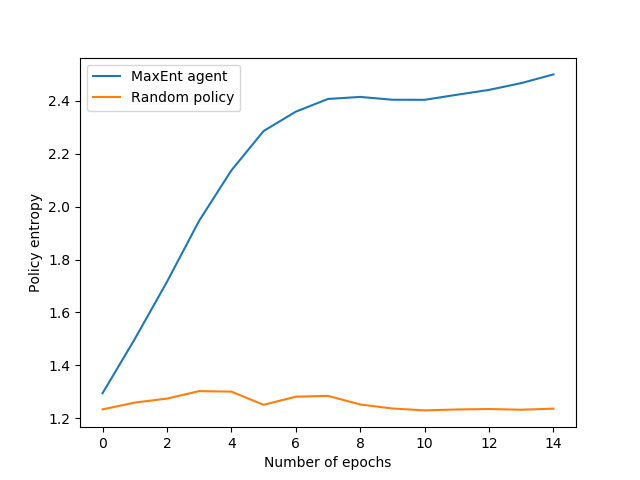}
      \subcaption{}\label{fig:exp1-pend}
    \end{minipage} &
    \begin{minipage}{0.33\linewidth}
        \includegraphics[width=\textwidth]{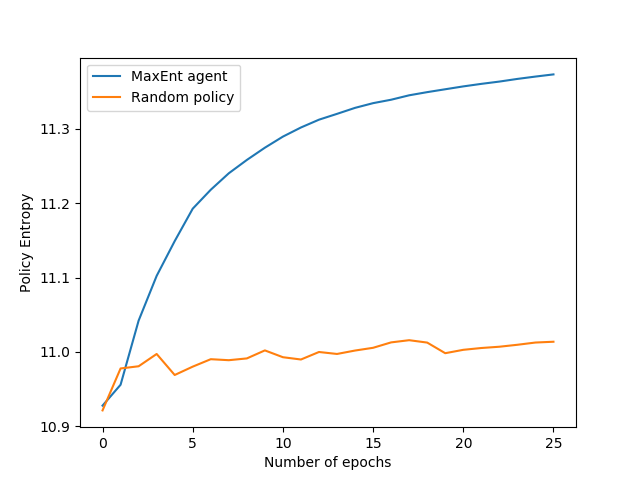}
        \subcaption{}\label{fig:exp1-ant}
    \end{minipage} \\\vspace{-0.2cm}
    \begin{minipage}{0.3\linewidth}
        \includegraphics[width=\textwidth]{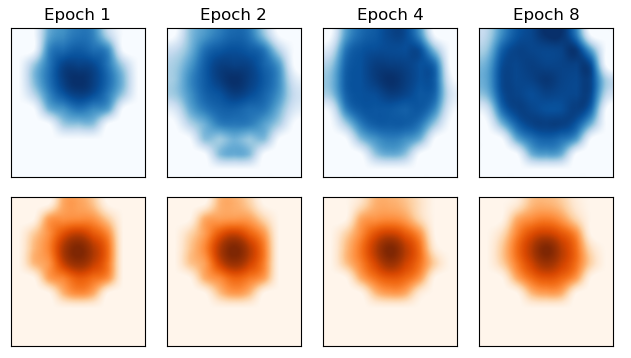}
        \subcaption{}\label{fig:exp2-mc}
    \end{minipage} &
    \begin{minipage}[]{0.32\linewidth}
      \includegraphics[width=\textwidth]{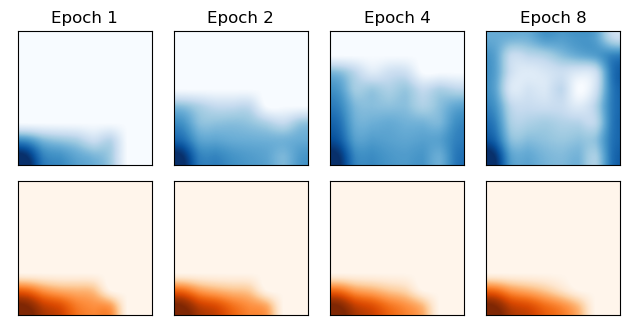}
      \subcaption{}\label{fig:exp2-pend}
    \end{minipage} &
    \begin{minipage}{0.3\linewidth}
        \includegraphics[width=\textwidth]{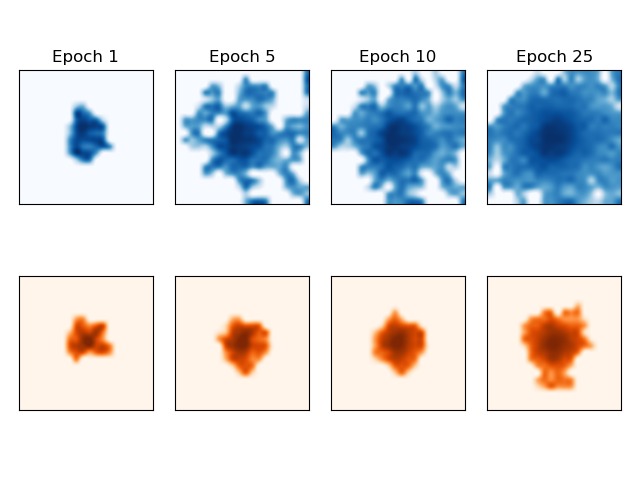}
        \subcaption{}\label{fig:exp2-ant}
    \end{minipage}
    \end{tabular}
\caption{Results of the preliminary experiments. In each plot, {\color{blue}blue} represents the MaxEnt agent, and {\color{orange}orange} represents the random baseline. \ref{fig:exp1-mc}, \ref{fig:exp1-pend}, and \ref{fig:exp1-ant} show the entropy of the policy evolving with the number of epochs. \ref{fig:exp2-mc}, \ref{fig:exp2-pend}, and \ref{fig:exp2-ant} show the log-probability of occupancy of the two-dimensional state space. In \ref{fig:exp2-ant}, the infinite $xy$ grid is limited to $[-20, 20]\times[-20,20]$.}\label{fig:exps}
\end{figure*}

\begin{figure}[t!]
    \centering
    \begin{subfigure}[b]{0.5\textwidth}
     \includegraphics[width=\textwidth]{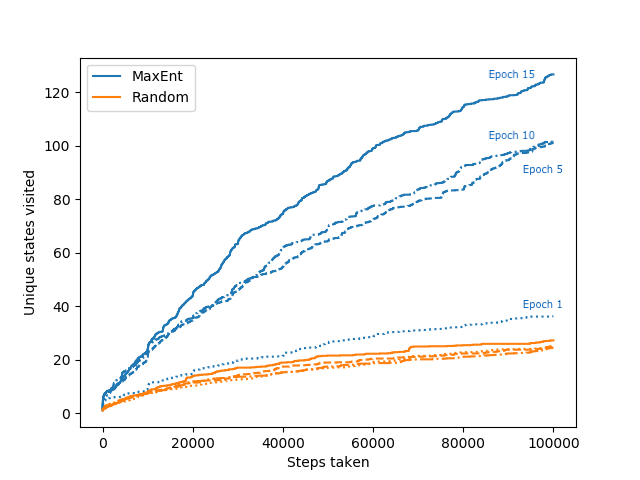}
    \end{subfigure}
    \caption{The number of distinct $xy$ states visited by \texttt{Ant} at various epochs. Results were averaged over $N=20$ executions. As the number of policies in the mixture increases, the agent reaches more unique states in the same amount of time.}\label{fig:states-visited}
\end{figure}

We report the results from a preliminary set of experiments\footnote{The open-source implementations may be found at \url{https://github.com/abbyvansoest/maxent_base} and \url{https://github.com/abbyvansoest/maxent_ant}.}. In each case, the MaxEnt agent learns to access the set of reachable states within a small number of iterations, while monotonically increasing the entropy of the induced state distribution. 

Recall that Algorithm~\ref{mainA} requires access to an approximate planning oracle and a density estimator for the induced distribution. Here the density estimator is deliberately chosen to be simple -- a count-based estimate over the discretized state space. It is possible to use neural density estimators and other function-approximation based estimators in its stead.

\subsection{Environments and the discretization procedure}

The $2$-dimensional state spaces for \texttt{MountainCar} and \texttt{Pendulum} (from \cite{brockman2016openai}) were discretized evenly to grids of size $10\times 9$ and $8\times 8$, respectively. For \texttt{Pendulum}, the maximum torque and velocity were capped at $1.0$ and $7.0$, respectively. 

 The $29$-dimensional state space for \texttt{Ant} (with a Mujoco engine) was first reduced to dimension $7$, combining the agent's $x$ and $y$ location in the gridspace with a $5$-dimensional random projection of the remaining $27$ states. The $x$ and $y$ dimensions were discretized into $16$ bins in the range $[-12, 12]$. The other dimensions were each normalized and discretized into $15$ bins in the range $[-1, 1]$. While the planning agent agent had access to the full state representation, the density estimation was performed exclusively on the reduced representation.

\subsection{Algorithmic details}

\textit{Reward function.} Each planning agent was trained to maximize a \textrm{KL} divergence objective function, ie. $KL(\textrm{Unif} || d_\pi)$. 

\textit{MountainCar and Pendulum}. The planning oracle is a REINFORCE \cite{sutton2000policy} agent, where the the output policy from the previous iteration is used as the initial policy for the next iteration. The policy class is a neural net with a single hidden layer consisting of 128 units. The agent is trained on 400 and 200 episodes every epoch for \texttt{MountainCar} and \texttt{Pendulum}, respectively. The baseline agent chooses its action randomly at every time step.

\textit{Ant}. The planning oracle is a Soft Actor-Critic \cite{DBLP:journals/corr/abs-1801-01290} agent. The policy class is a neural net with 2 hidden layers composed of 300 units and the ReLU activation function. The agent is trained for 30 episodes, each of which consists of a roll-out of 5000 steps. The mixed policy is executed over $10$ trials of $T=10000$ steps at the end of each epoch in order to approximate the policy distribution and compute the next reward function. The baseline agent chooses its actions randomly for the same number of trials and steps.

\subsection*{Acknowledgements}
Sham Kakade acknowledges funding from the Washington Research Foundation for Innovation in Data-intensive Discovery, the DARPA award FA8650-18-2-7836, and the ONR award N00014-18-1-2247. The authors thank Shie Mannor for helpful discussions.

\bibliography{main,ref}

\newcommand{\etalchar}[1]{$^{#1}$}
\begin{thebibliography}{OBvdOM17}

\bibitem[AN04]{Abbeel2004ApprenticeshipLV}
Pieter Abbeel and Andrew~Y. Ng.
\newblock Apprenticeship learning via inverse reinforcement learning.
\newblock In {\em ICML}, 2004.

\bibitem[AOM17]{azar2017minimax}
Mohammad~Gheshlaghi Azar, Ian Osband, and R{\'e}mi Munos.
\newblock Minimax regret bounds for reinforcement learning.
\newblock {\em arXiv preprint arXiv:1703.05449}, 2017.

\bibitem[B{\etalchar{+}}15]{bubeck2015convex}
S{\'e}bastien Bubeck et~al.
\newblock Convex optimization: Algorithms and complexity.
\newblock {\em Foundations and Trends in Machine Learning}, 8(3-4):231--357,
  2015.

\bibitem[BCP{\etalchar{+}}16]{brockman2016openai}
Greg Brockman, Vicki Cheung, Ludwig Pettersson, Jonas Schneider, John Schulman,
  Jie Tang, and Wojciech Zaremba.
\newblock Openai gym.
\newblock {\em arXiv preprint arXiv:1606.01540}, 2016.

\bibitem[BEP{\etalchar{+}}18]{pathak18largescale}
Yuri Burda, Harri Edwards, Deepak Pathak, Amos Storkey, Trevor Darrell, and
  Alexei~A. Efros.
\newblock Large-scale study of curiosity-driven learning.
\newblock In {\em arXiv:1808.04355}, 2018.

\bibitem[Ber05]{bertsekas2005dynamic}
Dimitri~P Bertsekas.
\newblock {\em Dynamic programming and optimal control}, volume~1.
\newblock Athena Scientific, 2005.

\bibitem[BESK18]{RandNetDist}
Yuri Burda, Harrison Edwards, Amos Storkey, and Oleg Klimov.
\newblock Exploration by random network distillation.
\newblock {\em arXiv preprint arXiv:1810.12894}, 2018.

\bibitem[BSO{\etalchar{+}}16]{DBLP:conf/nips/BellemareSOSSM16}
Marc~G. Bellemare, Sriram Srinivasan, Georg Ostrovski, Tom Schaul, David
  Saxton, and R{\'{e}}mi Munos.
\newblock Unifying count-based exploration and intrinsic motivation.
\newblock In {\em Advances in Neural Information Processing Systems 29: Annual
  Conference on Neural Information Processing Systems 2016, December 5-10,
  2016, Barcelona, Spain}, pages 1471--1479, 2016.

\bibitem[CBS05]{NIPS2004_2552}
Nuttapong Chentanez, Andrew~G. Barto, and Satinder~P. Singh.
\newblock Intrinsically motivated reinforcement learning.
\newblock In L.~K. Saul, Y.~Weiss, and L.~Bottou, editors, {\em Advances in
  Neural Information Processing Systems 17}, pages 1281--1288. MIT Press, 2005.

\bibitem[DB15]{dann2015sample}
Christoph Dann and Emma Brunskill.
\newblock Sample complexity of episodic fixed-horizon reinforcement learning.
\newblock In {\em Advances in Neural Information Processing Systems}, pages
  2818--2826, 2015.

\bibitem[DFVR03]{de2003linear}
Daniela~Pucci De~Farias and Benjamin Van~Roy.
\newblock The linear programming approach to approximate dynamic programming.
\newblock {\em Operations research}, 51(6):850--865, 2003.

\bibitem[FCRL17]{NIPS2017_6851}
Justin Fu, John Co-Reyes, and Sergey Levine.
\newblock Ex2: Exploration with exemplar models for deep reinforcement
  learning.
\newblock In I.~Guyon, U.~V. Luxburg, S.~Bengio, H.~Wallach, R.~Fergus,
  S.~Vishwanathan, and R.~Garnett, editors, {\em Advances in Neural Information
  Processing Systems 30}, pages 2577--2587. Curran Associates, Inc., 2017.

\bibitem[FGKM18]{fazel2018global}
Maryam Fazel, Rong Ge, Sham Kakade, and Mehran Mesbahi.
\newblock Global convergence of policy gradient methods for the linear
  quadratic regulator.
\newblock In {\em International Conference on Machine Learning}, pages
  1466--1475, 2018.

\bibitem[FW56]{frank1956algorithm}
Marguerite Frank and Philip Wolfe.
\newblock An algorithm for quadratic programming.
\newblock {\em Naval Research Logistics (NRL)}, 3(1-2):95--110, 1956.

\bibitem[HCC{\etalchar{+}}16]{NIPS2016_6591}
Rein Houthooft, Xi~Chen, Xi~Chen, Yan Duan, John Schulman, Filip De~Turck, and
  Pieter Abbeel.
\newblock Vime: Variational information maximizing exploration.
\newblock In D.~D. Lee, M.~Sugiyama, U.~V. Luxburg, I.~Guyon, and R.~Garnett,
  editors, {\em Advances in Neural Information Processing Systems 29}, pages
  1109--1117. Curran Associates, Inc., 2016.

\bibitem[HZAL18]{DBLP:journals/corr/abs-1801-01290}
Tuomas Haarnoja, Aurick Zhou, Pieter Abbeel, and Sergey Levine.
\newblock Soft actor-critic: Off-policy maximum entropy deep reinforcement
  learning with a stochastic actor.
\newblock {\em CoRR}, abs/1801.01290, 2018.

\bibitem[Kae93]{Kaelbling93b}
Leslie~Pack Kaelbling.
\newblock Learning to achieve goals.
\newblock In {\em Proceedings of the Thirteenth International Joint Conference
  on Artificial Intelligence}, Chambery, France, 1993. Morgan Kaufmann.

\bibitem[Kak03]{kakade2003sample}
Sham~Machandranath Kakade.
\newblock {\em On the sample complexity of reinforcement learning}.
\newblock PhD thesis, University of London London, England, 2003.

\bibitem[KS02]{kearns2002near}
Michael Kearns and Satinder Singh.
\newblock Near-optimal reinforcement learning in polynomial time.
\newblock {\em Machine learning}, 49(2-3):209--232, 2002.

\bibitem[LH14]{lattimore2014near}
Tor Lattimore and Marcus Hutter.
\newblock Near-optimal pac bounds for discounted mdps.
\newblock {\em Theoretical Computer Science}, 558:125--143, 2014.

\bibitem[LLTyO12]{NIPS2012_4642}
Manuel Lopes, Tobias Lang, Marc Toussaint, and Pierre yves Oudeyer.
\newblock Exploration in model-based reinforcement learning by empirically
  estimating learning progress.
\newblock In F.~Pereira, C.~J.~C. Burges, L.~Bottou, and K.~Q. Weinberger,
  editors, {\em Advances in Neural Information Processing Systems 25}, pages
  206--214. Curran Associates, Inc., 2012.

\bibitem[MJR15]{NIPS2015_5668}
Shakir Mohamed and Danilo Jimenez~Rezende.
\newblock Variational information maximisation for intrinsically motivated
  reinforcement learning.
\newblock In C.~Cortes, N.~D. Lawrence, D.~D. Lee, M.~Sugiyama, and R.~Garnett,
  editors, {\em Advances in Neural Information Processing Systems 28}, pages
  2125--2133. Curran Associates, Inc., 2015.

\bibitem[MKS{\etalchar{+}}15]{mnih2015human}
Volodymyr Mnih, Koray Kavukcuoglu, David Silver, Andrei~A Rusu, Joel Veness,
  Marc~G Bellemare, Alex Graves, Martin Riedmiller, Andreas~K Fidjeland, Georg
  Ostrovski, et~al.
\newblock Human-level control through deep reinforcement learning.
\newblock {\em Nature}, 518(7540):529, 2015.

\bibitem[NHR99]{Ng1999PolicyIU}
Andrew~Y. Ng, Daishi Harada, and Stuart~J. Russell.
\newblock Policy invariance under reward transformations: Theory and
  application to reward shaping.
\newblock In {\em ICML}, 1999.

\bibitem[NPD{\etalchar{+}}18]{DBLP:journals/corr/abs-1807-04742}
Ashvin Nair, Vitchyr Pong, Murtaza Dalal, Shikhar Bahl, Steven Lin, and Sergey
  Levine.
\newblock Visual reinforcement learning with imagined goals.
\newblock {\em CoRR}, abs/1807.04742, 2018.

\bibitem[OAC18]{NIPS2018_8080}
Ian Osband, John Aslanides, and Albin Cassirer.
\newblock Randomized prior functions for deep reinforcement learning.
\newblock In S.~Bengio, H.~Wallach, H.~Larochelle, K.~Grauman, N.~Cesa-Bianchi,
  and R.~Garnett, editors, {\em Advances in Neural Information Processing
  Systems 31}, pages 8625--8637. Curran Associates, Inc., 2018.

\bibitem[OBPVR16]{NIPS2016_6501}
Ian Osband, Charles Blundell, Alexander Pritzel, and Benjamin Van~Roy.
\newblock Deep exploration via bootstrapped dqn.
\newblock In D.~D. Lee, M.~Sugiyama, U.~V. Luxburg, I.~Guyon, and R.~Garnett,
  editors, {\em Advances in Neural Information Processing Systems 29}, pages
  4026--4034. Curran Associates, Inc., 2016.

\bibitem[OBvdOM17]{DBLP:conf/icml/OstrovskiBOM17}
Georg Ostrovski, Marc~G. Bellemare, A{\"{a}}ron van~den Oord, and R{\'{e}}mi
  Munos.
\newblock Count-based exploration with neural density models.
\newblock In {\em Proceedings of the 34th International Conference on Machine
  Learning, {ICML} 2017, Sydney, NSW, Australia, 6-11 August 2017}, pages
  2721--2730, 2017.

\bibitem[PAED17]{pathakICMl17curiosity}
Deepak Pathak, Pulkit Agrawal, Alexei~A. Efros, and Trevor Darrell.
\newblock Curiosity-driven exploration by self-supervised prediction.
\newblock In {\em ICML}, 2017.

\bibitem[Put14]{puterman2014markov}
Martin~L Puterman.
\newblock {\em Markov decision processes: discrete stochastic dynamic
  programming}.
\newblock John Wiley \& Sons, 2014.

\bibitem[RGB11]{Ross2011ARO}
St{\'e}phane Ross, Geoffrey~J. Gordon, and J.~Andrew Bagnell.
\newblock A reduction of imitation learning and structured prediction to
  no-regret online learning.
\newblock In {\em AISTATS}, 2011.

\bibitem[SHM{\etalchar{+}}16]{silver2016mastering}
David Silver, Aja Huang, Chris~J Maddison, Arthur Guez, Laurent Sifre, George
  Van Den~Driessche, Julian Schrittwieser, Ioannis Antonoglou, Veda
  Panneershelvam, Marc Lanctot, et~al.
\newblock Mastering the game of go with deep neural networks and tree search.
\newblock {\em nature}, 529(7587):484, 2016.

\bibitem[SLB09]{Singh_wheredo}
Satinder Singh, Richard~L. Lewis, and Andrew~G. Barto.
\newblock Where do rewards come from?
\newblock {\em Proceedings of the Annual Conference of the Cognitive Science
  Society (CogSci)}, 2009.

\bibitem[SLBS10]{Singh2010IntrinsicallyMR}
Satinder~P. Singh, Richard~L. Lewis, Andrew~G. Barto, and Jonathan Sorg.
\newblock Intrinsically motivated reinforcement learning: An evolutionary
  perspective.
\newblock {\em IEEE Transactions on Autonomous Mental Development}, 2:70--82,
  2010.

\bibitem[SLW{\etalchar{+}}06]{strehl2006pac}
Alexander~L Strehl, Lihong Li, Eric Wiewiora, John Langford, and Michael~L
  Littman.
\newblock Pac model-free reinforcement learning.
\newblock In {\em Proceedings of the 23rd international conference on Machine
  learning}, pages 881--888. ACM, 2006.

\bibitem[SMSM00]{sutton2000policy}
Richard~S Sutton, David~A McAllester, Satinder~P Singh, and Yishay Mansour.
\newblock Policy gradient methods for reinforcement learning with function
  approximation.
\newblock In {\em Advances in neural information processing systems}, pages
  1057--1063, 2000.

\bibitem[SRM{\etalchar{+}}18]{DBLP:journals/corr/abs-1810-02274}
Nikolay Savinov, Anton Raichuk, Rapha{\"{e}}l Marinier, Damien Vincent, Marc
  Pollefeys, Timothy~P. Lillicrap, and Sylvain Gelly.
\newblock Episodic curiosity through reachability.
\newblock {\em CoRR}, abs/1810.02274, 2018.

\bibitem[SS10]{szita2010model}
Istv{\'a}n Szita and Csaba Szepesv{\'a}ri.
\newblock Model-based reinforcement learning with nearly tight exploration
  complexity bounds.
\newblock In {\em Proceedings of the 27th International Conference on Machine
  Learning (ICML-10)}, pages 1031--1038, 2010.

\bibitem[THF{\etalchar{+}}17]{NIPS2017_6868}
Haoran Tang, Rein Houthooft, Davis Foote, Adam Stooke, OpenAI Xi~Chen, Yan
  Duan, John Schulman, Filip DeTurck, and Pieter Abbeel.
\newblock \#exploration: A study of count-based exploration for deep
  reinforcement learning.
\newblock In I.~Guyon, U.~V. Luxburg, S.~Bengio, H.~Wallach, R.~Fergus,
  S.~Vishwanathan, and R.~Garnett, editors, {\em Advances in Neural Information
  Processing Systems 30}, pages 2753--2762. Curran Associates, Inc., 2017.

\bibitem[WdWK{\etalchar{+}}18]{DBLP:journals/corr/abs-1811-11359}
David Warde{-}Farley, Tom~Van de~Wiele, Tejas Kulkarni, Catalin Ionescu, Steven
  Hansen, and Volodymyr Mnih.
\newblock Unsupervised control through non-parametric discriminative rewards.
\newblock {\em CoRR}, abs/1811.11359, 2018.

\bibitem[WRR{\etalchar{+}}17]{weber2017imagination}
Th{\'e}ophane Weber, S{\'e}bastien Racani{\`e}re, David~P Reichert, Lars
  Buesing, Arthur Guez, Danilo~Jimenez Rezende, Adria~Puigdomenech Badia, Oriol
  Vinyals, Nicolas Heess, Yujia Li, et~al.
\newblock Imagination-augmented agents for deep reinforcement learning.
\newblock {\em arXiv preprint arXiv:1707.06203}, 2017.

\bibitem[ZOS18]{Zheng2018OnLI}
Zeyu Zheng, Junhyuk Oh, and Satinder Singh.
\newblock On learning intrinsic rewards for policy gradient methods.
\newblock {\em CoRR}, abs/1804.06459, 2018.

\end{thebibliography}
\bibliographystyle{alpha}

\end{document}